\theoremstyle{plain}
\newtheorem{theorem}{Theorem}[section]
\newtheorem{lemma}[theorem]{Lemma}
\newtheorem{corollary}[theorem]{Corollary}
\theoremstyle{definition}
\theoremstyle{remark}
\begin{document}

\twocolumn[
\icmltitle{Maximum Entropy Reinforcement Learning with Diffusion Policy }




\begin{icmlauthorlist}
\icmlauthor{Xiaoyi Dong}{ia,ai_ucas}
\icmlauthor{Jian Cheng}{ia,future_ucas,airia}
\icmlauthor{Xi Sheryl Zhang}{ia,airia}
\end{icmlauthorlist}

\icmlaffiliation{ia}{C\textsuperscript{2}DL, Institute of Automation, Chinese Academy of Sciences}
\icmlaffiliation{ai_ucas}{School of Artificial Intelligence, University of Chinese Academy of Sciences}
\icmlaffiliation{future_ucas}{School of Future Technology, University of Chinese Academy of Sciences}
\icmlaffiliation{airia}{AiRiA}

\icmlcorrespondingauthor{Xi Sheryl Zhang}{sheryl.zhangxi@gmail.com}

\icmlkeywords{Machine Learning, ICML}

\vskip 0.3in
]



\printAffiliationsAndNotice{}  

\begin{abstract}
The Soft Actor-Critic (SAC) algorithm with a Gaussian policy has become a mainstream implementation for realizing the Maximum Entropy Reinforcement Learning (MaxEnt RL) objective, which incorporates entropy maximization to encourage exploration and enhance policy robustness. While the Gaussian policy performs well on simpler tasks, its exploration capacity and potential performance in complex multi-goal RL environments are limited by its inherent unimodality.  In this paper, we employ the diffusion model, a powerful generative model capable of capturing complex multimodal distributions, as the policy representation to fulfill the MaxEnt RL objective, developing a method named \textit{MaxEnt RL with Diffusion Policy} (MaxEntDP). Our method enables efficient exploration and brings the policy closer to the optimal MaxEnt policy. Experimental results on Mujoco benchmarks show that MaxEntDP outperforms the Gaussian policy and other generative models within the MaxEnt RL framework, and performs comparably to other state-of-the-art diffusion-based online RL algorithms. Our code is available at \hyperlink{https://github.com/diffusionyes/MaxEntDP}{https://github.com/diffusionyes/MaxEntDP}.

\end{abstract}

\section{Introduction}
\label{submission}
Reinforcement Learning (RL) has emerged as a powerful paradigm for training intelligent agents to make decisions in complex control tasks \cite{silver2016mastering,mnih2015human,kaufmann2023champion,kiran2021deep,ibarz2021train}. Traditionally, RL focuses on maximizing the expected cumulative reward, where the agent selects actions that yield the highest return in each state \cite{sutton1999reinforcement}. However, this approach often overlooks the inherent uncertainty and variability of real-world environments, which can lead to suboptimal or overly deterministic policies. To address these limitations, Maximum Entropy Reinforcement Learning (MaxEnt RL) incorporates entropy maximization into the standard RL objective, encouraging exploration and improving robustness during policy learning \cite{toussaint2009robot,ziebart2010modeling,haarnoja2017reinforcement}.

The Soft Actor-Critic (SAC) algorithm \cite{haarnoja2018soft} is an effective method for achieving the MaxEnt RL objective, which alternates between policy evaluation and policy improvement to progressively refine the policy. With high-capacity neural network approximators and suitable optimization techniques, SAC can provably converge to the optimal MaxEnt policy within the chosen policy set. The choice of policy representation in SAC is crucial, as it influences the exploration behavior during training and determines the proximity of the candidate policies to the optimal MaxEnt policy. In complex multi-goal RL tasks, where multiple feasible behavioral modes exist, the commonly used Gaussian policy typically explores only a single mode, which can cause the agent to get trapped in a local optimum and fail to approach the optimal MaxEnt policy that captures all possible behavioral modes.

In this paper, we propose using diffusion models \cite{sohl2015deep,song2019generative,ho2020denoising,song2021score}, a powerful generative model, as the policy representation within the SAC framework. This allows for the exploration of all promising behavioral modes and facilitates convergence to the optimal MaxEnt policy. Diffusion models transform the original data distribution into a tractable Gaussian by progressively adding Gaussian noise, which is known as the forward diffusion process. After training a neural network to predict the noise added to the noisy samples, the original data can be recovered by solving the reverse diffusion process with the noise prediction network. While several generative models, e.g., variational autoencoders \cite{kingma2013auto}, generative adversarial networks \cite{goodfellow2020generative}, and normalizing flows \cite{rezende2015variational} could serve as the policy representation, we choose diffusion models due to their balance between expressiveness and inference speed, achieving remarkable performance with affordable training and inference costs.

However, integrating diffusion models into the SAC framework presents two key challenges: 1) How to train a diffusion model to approximate the exponential of the Q-function in the policy improvement step? 2) How to compute the log probability of the diffusion policy when evaluating the soft Q-function? To address the first challenge, we analyze the training target of the noise prediction network in diffusion models and propose a Q-weighted Noise Estimation method. For the second challenge, we introduce a numerical integration technique to approximate the log probability of the diffusion model. We evaluate the effectiveness of our approach on Mujoco benchmarks. The experimental results demonstrate that our method outperforms the Gaussian policy and other generative models within the MaxEnt RL framework, and performs comparably to other state-of-the-art diffusion-based online RL algorithms.

\section{Preliminary}

\subsection{Maximum Entropy Reinforcement Learning}
In this paper, we focus on policy learning in continuous action spaces. We consider a Markov Decision Process (MDP) defined by the tuple \((\mathcal{S}, \mathcal{A}, p, r, \rho_0, \gamma)\), where \(\mathcal{S}\) represents the state space, \(\mathcal{A}\) is the continuous action space, \(p: \mathcal{S} \times \mathcal{S} \times \mathcal{A} \to [0, +\infty]\) is the probability density function of the next state \(\boldsymbol{s}_{t+1}\ \in \mathcal{S}\) given the current state \(\boldsymbol{s}_t\ \in \mathcal{S}\) and the action \(\boldsymbol{a}_t\ \in \mathcal{A}\), \(r: \mathcal{S} \times \mathcal{A} \to [r_{\min}, r_{\max}]\) is the bounded reward function, \(\rho_0: \mathcal{S} \to [0, +\infty]\) is the distribution of the initial state \(\boldsymbol{s}_0\) and \(\gamma \in [0, 1]\) is the discount factor. The marginals of the trajectory distribution induced by a policy \(\pi(\boldsymbol{a}_t | \boldsymbol{s}_t)\) are denoted as \(\rho_{\pi}(\boldsymbol{s}_t, \boldsymbol{a}_t)\).

The standard RL aims to learn a policy that maximizes the expected cumulative reward. To encourage stochastic policies, Maximum Entropy RL augments this objective by incorporating the expected entropy of the policy:
\begin{equation}
\resizebox{0.9\linewidth}{!}{$
    J(\pi) = \sum_{t=0}^{\infty}  \gamma^{t} \mathbb{E}_{(\boldsymbol{s}_t, \boldsymbol{a}_t) \sim \rho_{\pi}} \left[  r(\boldsymbol{s}_t, \boldsymbol{a}_t) + \beta \mathcal{H}(\pi(\cdot | \boldsymbol{s}_t)) \right],
$}
\end{equation}
where \(\mathcal{H}(\pi(\cdot | \boldsymbol{s}_t)) = \mathbb{E}_{\boldsymbol{a}_t \sim \pi(\cdot | \boldsymbol{s}_t)} \left[ -\log \pi(\boldsymbol{a}_t | \boldsymbol{s}_t) \right]\), and \(\beta\) is the temperature parameter that controls the trade-off between the entropy and reward terms. A higher value of \(\beta\) drives the optimal policy to be more stochastic, which is advantageous for RL tasks requiring extensive exploration. In contrast, the standard RL objective can be seen as the limiting case where \(\beta \to 0\).

\subsection{Soft Actor Critic}
The optimal maximum entropy policy can be derived by applying the Soft Actor-Critic (SAC) algorithm \cite{haarnoja2018soft}. In this subsection, we will briefly introduce the framework of SAC, and the relevant proofs are provided in Appendix \ref{appendix::SAC}. The SAC algorithm utilizes two parameterized networks, \( Q_{\theta} \) and \( \pi_{\phi} \), to model the soft Q-function and the policy, where \( \theta \) and \( \phi \) represent the parameters of the respective networks. These networks are optimized by alternating between policy evaluation and policy improvement.

In the policy evaluation step, the soft Q-function of the current policy \( \pi_{\phi} \) is learned by minimizing the soft Bellman error:
\begin{equation}
\label{soft Bellman error}
    L(\theta) = \mathbb{E}_{(\boldsymbol{s},\boldsymbol{a}) \sim \mathcal{D}} \left[ \frac{1}{2} \left( Q_{\theta}(\boldsymbol{s},\boldsymbol{a}) - \hat{Q}(\boldsymbol{s},\boldsymbol{a}) \right)^2 \right],
\end{equation}
where \( \mathcal{D} \) is the replay buffer, and the target value
$ \hat{Q}(\boldsymbol{s},\boldsymbol{a}) = r(\boldsymbol{s}, \boldsymbol{a}) + \gamma \mathbb{E}_{\boldsymbol{s}' \sim p, \boldsymbol{a}' \sim \pi_{\phi}} \left[ Q_{\theta}(\boldsymbol{s}',\boldsymbol{a}') - \beta \log \pi_{\phi}(\boldsymbol{a}'|\boldsymbol{s}') \right]$. 

In the policy improvement step, the old policy \( \pi_{{\phi}_k} \) is updated towards the exponential of the new Q-function, whose soft value is guaranteed higher than the old policy. However, the target policy may be too complex to be exactly represented by any policy within the parameterized policy set \( \Pi = \{\pi_{\phi} | \phi \in \Phi \} \), where $\Phi$ is the parameter space of the policy. Therefore, the new policy is obtained by projecting the target policy onto the policy set \( \Pi \) based on the Kullback-Leibler divergence:
\begin{equation}
\label{equation::SAC policy loss}
    L(\phi) = \text{D}_{\text{KL}} \left( \pi_{\phi}(\cdot | \boldsymbol{s}) \ \middle\| \ \frac{\exp(\frac{1}{\beta}Q_{\theta}(\boldsymbol{s}, \cdot))}{Z_{\theta}(\boldsymbol{s})} \right).
\end{equation}

\begin{theorem}
\label{theorem::soft policy iteration}
    \textbf{(Soft Policy Iteration)} In the tabular setting, let $L(\theta_k)=0$ and $L(\phi_k)$ be minimized for each $k$. Repeated application of policy evaluation and policy improvement, i.e., $k \to \infty$, $\pi_{\phi_k}$ will converge to a policy $\pi^*$ such that $Q^{ \pi^{*}}(\boldsymbol{s}, \boldsymbol{a}) \geq Q^{ \pi}(\boldsymbol{s}, \boldsymbol{a})$ for all $\pi \in \Pi$ and $(\boldsymbol{s}, \boldsymbol{a}) \in \mathcal{S} \times \mathcal{A}$ with $|\mathcal{A}| < \infty$.
\end{theorem}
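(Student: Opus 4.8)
The plan is to follow the classical three-part template for policy-iteration convergence, adapted to the entropy-regularized (``soft'') setting: (i) a soft policy evaluation lemma showing the evaluation step recovers $Q^{\pi_{\phi_k}}$ exactly, (ii) a soft policy improvement lemma showing the KL-projection step of \eqref{equation::SAC policy loss} yields a policy with pointwise-larger soft $Q$-value, and (iii) a monotone-convergence argument that stitches (i) and (ii) together and identifies the limit as optimal within $\Pi$. Throughout I would use that the tabular assumption $|\mathcal{S}|, |\mathcal{A}| < \infty$ together with bounded rewards makes every quantity in sight (entropies, partition functions $Z_{\theta}(\boldsymbol{s}) = \sum_{\boldsymbol{a}} \exp(\tfrac{1}{\beta}Q_{\theta}(\boldsymbol{s},\boldsymbol{a}))$, soft $Q$-values) finite and uniformly bounded.

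For step (i), I would introduce the soft Bellman operator $\mathcal{T}^{\pi} Q(\boldsymbol{s},\boldsymbol{a}) = r(\boldsymbol{s},\boldsymbol{a}) + \gamma \mathbb{E}_{\boldsymbol{s}' \sim p}\big[ \mathbb{E}_{\boldsymbol{a}' \sim \pi}[ Q(\boldsymbol{s}',\boldsymbol{a}') - \beta \log \pi(\boldsymbol{a}'|\boldsymbol{s}') ] \big]$ and observe that, since the entropy term does not involve $Q$, this is an affine map whose linear part is a $\gamma$-scaled stochastic matrix, hence a $\gamma$-contraction in $\|\cdot\|_\infty$. By Banach's fixed-point theorem it has a unique fixed point, which is by definition $Q^{\pi}$. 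The hypothesis $L(\theta_k) = 0$ in \eqref{soft Bellman error} says precisely that $Q_{\theta_k}$ solves the soft Bellman equation for $\pi_{\phi_k}$, so $Q_{\theta_k} = Q^{\pi_{\phi_k}}$ exactly.

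For step (ii), let $\pi_{\mathrm{old}} = \pi_{\phi_k}$ and let $\pi_{\mathrm{new}} = \pi_{\phi_{k+1}}$ minimize \eqref{equation::SAC policy loss}. Since $\pi_{\mathrm{old}} \in \Pi$, optimality gives $\text{D}_{\text{KL}}(\pi_{\mathrm{new}}(\cdot|\boldsymbol{s}) \,\|\, \exp(\tfrac{1}{\beta}Q^{\pi_{\mathrm{old}}}(\boldsymbol{s},\cdot))/Z(\boldsymbol{s})) \le \text{D}_{\text{KL}}(\pi_{\mathrm{old}}(\cdot|\boldsymbol{s}) \,\|\, \cdots)$ for every $\boldsymbol{s}$. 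Expanding both divergences, the $\log Z(\boldsymbol{s})$ terms cancel; multiplying by $\beta > 0$ and rearranging yields $\mathbb{E}_{\boldsymbol{a} \sim \pi_{\mathrm{new}}}[ Q^{\pi_{\mathrm{old}}}(\boldsymbol{s},\boldsymbol{a}) - \beta \log \pi_{\mathrm{new}}(\boldsymbol{a}|\boldsymbol{s}) ] \ge V^{\pi_{\mathrm{old}}}(\boldsymbol{s})$, where $V^{\pi_{\mathrm{old}}}$ is the soft value. Substituting this into the soft Bellman equation $Q^{\pi_{\mathrm{old}}}(\boldsymbol{s},\boldsymbol{a}) = r(\boldsymbol{s},\boldsymbol{a}) + \gamma \mathbb{E}_{\boldsymbol{s}'}[V^{\pi_{\mathrm{old}}}(\boldsymbol{s}')]$ and iterating — the iteration converges because $Q^{\pi_{\mathrm{old}}}$ is bounded and $\gamma < 1$ — telescopes to $Q^{\pi_{\mathrm{old}}}(\boldsymbol{s},\boldsymbol{a}) \le Q^{\pi_{\mathrm{new}}}(\boldsymbol{s},\boldsymbol{a})$ for all $(\boldsymbol{s},\boldsymbol{a})$.

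Finally, for step (iii), step (ii) makes $\{Q^{\pi_{\phi_k}}(\boldsymbol{s},\boldsymbol{a})\}_k$ monotone non-decreasing for each $(\boldsymbol{s},\boldsymbol{a})$, and it is bounded above, so it converges pointwise to some $Q^\star$; let $\pi^\star$ be a corresponding limiting policy in $\Pi$. Passing to the limit in the projection-optimality and re-running the argument of step (ii), but now comparing $\pi^\star$ against an arbitrary $\pi \in \Pi$ rather than against the previous iterate, gives $\mathbb{E}_{\boldsymbol{a} \sim \pi^\star}[ Q^{\pi^\star}(\boldsymbol{s},\boldsymbol{a}) - \beta \log \pi^\star(\boldsymbol{a}|\boldsymbol{s}) ] \ge \mathbb{E}_{\boldsymbol{a} \sim \pi}[ Q^{\pi^\star}(\boldsymbol{s},\boldsymbol{a}) - \beta \log \pi(\boldsymbol{a}|\boldsymbol{s}) ]$, and the same telescoping upgrades this to $Q^{\pi^\star}(\boldsymbol{s},\boldsymbol{a}) \ge Q^{\pi}(\boldsymbol{s},\boldsymbol{a})$ for all $\pi \in \Pi$ and all $(\boldsymbol{s},\boldsymbol{a})$, which is the claim. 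I expect the delicate point to be exactly this last limiting step: one must justify that the iterates $\pi_{\phi_k}$ admit a limit $\pi^\star \in \Pi$ with $Q^{\pi^\star} = Q^\star$ and with the projection-optimality inherited in the limit (e.g. via compactness of $\Phi$ and continuity of $\phi \mapsto \pi_\phi$ and $\pi \mapsto Q^\pi$), or else argue the final inequality directly without extracting a limiting policy; the contraction and improvement lemmas themselves are routine once finiteness of $\mathcal{A}$ is invoked.
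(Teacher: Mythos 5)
Your proposal is correct and follows essentially the same route as the paper's own proof: a contraction argument for soft policy evaluation, the KL-projection comparison with the old policy (with the $\log Z$ terms cancelling) followed by iterative expansion of the soft Bellman equation for soft policy improvement, and a monotone-bounded convergence argument that reuses the improvement step against an arbitrary $\pi \in \Pi$ to establish optimality of the limit. The delicate limiting step you flag (existence of $\pi^* \in \Pi$ inheriting the projection optimality) is likewise treated only informally in the paper, so your treatment matches its level of rigor.
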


Theorem \ref{theorem::soft policy iteration} suggests that if the Bellman error can be reduced to zero and the policy loss is minimized at each optimization step, the soft actor-critic algorithm will converge to the optimal maximum entropy policy within the policy set \(\Pi\). This indicates that the choice of the policy set $\Pi$ significantly affects the performance of the soft actor-critic algorithm. Specifically, a more expressive policy class will yield a policy closer to the optimal MaxEnt policy. Inspired by this intuition, we employ the diffusion model to represent the policy, as it is highly expressive and well-suited to capture the complex multimodal distribution \cite{chi2023diffusion,wangdiffusion,chenoffline,ajayconditional}.

\subsection{Diffusion Models}
Diffusion models are powerful generative models. Given an unknown data distribution \( p(\boldsymbol{x}_0) \), which is typically a mixture of Dirac delta measures over the training dataset, diffusion models transform this data distribution into a tractable Gaussian distribution by progressively adding Gaussian noise \cite{ho2020denoising}. In the context of a Variance-Preserving (VP) diffusion process \cite{ho2020denoising,song2021score}, the transition from the original sample \( \boldsymbol{x}_0 \) at time \( t = 0 \) to the noisy sample \( \boldsymbol{x}_t \) at time \( t \in [0, 1] \) follows the distribution:
\begin{equation}
    p(\boldsymbol{x}_t|\boldsymbol{x}_0) = \mathcal{N}(\boldsymbol{x}_t | \sqrt{\sigma(\alpha_t)} \boldsymbol{x}_0, \sigma(-\alpha_t) \boldsymbol{I}),
\end{equation}
where \( \alpha_t \) represents the log of the Signal-to-Noise Ratio (SNR) at time \( t \), and \( \sigma(\cdot) \) is the sigmoid function. \( \alpha_t \) determines the amount of noise added at each time and is referred to as the noise schedule of a diffusion model. Denote the marginal distribution of \( \boldsymbol{x}_t \) as \( p(\boldsymbol{x}_t) \). The noise schedule should be designed to ensure that \( p(\boldsymbol{x}_1 | \boldsymbol{x}_0) \approx p(\boldsymbol{x}_1) \approx \mathcal{N}(\boldsymbol{x}_1 | \boldsymbol{0}, \boldsymbol{I}) \), and that \( \alpha_t \) is strictly decreasing w.r.t. \( t \). Then, starting from \( \boldsymbol{x}_1 \sim \mathcal{N}(\boldsymbol{x}_1 | \boldsymbol{0}, \boldsymbol{I}) \), the original data samples can be recovered by reversing the diffusion process from \( t = 1 \) to \( t = 0 \).
For sample generation, we can also employ the following probability flow ordinary differential equation (ODE) that shares the same marginal distribution with the diffusion process \cite{song2021score}:
\begin{equation}
\label{PFODE}
    \frac{\text{d}\boldsymbol{x}_t}{\text{d}t} = f(t)\boldsymbol{x}_t - \frac{1}{2} g^2(t) \nabla_{\boldsymbol{x}_t} \log p(\boldsymbol{x}_t),
\end{equation}
where \( f(t) = \frac{1}{2} \frac{\text{d} \log \sigma(\alpha_t)}{\text{d}t} \), \( g^2(t) = -\frac{\text{d} \log \sigma(\alpha_t)}{\text{d}t} \), and \( \nabla_{\boldsymbol{x}_t} \log p(\boldsymbol{x}_t) \), known as the score function, is the only unknown term. Consequently, diffusion models train a neural network \( \boldsymbol{\epsilon}_\phi(\boldsymbol{x}_t, \alpha_t) \) to approximate the scaled score function \( -\sqrt{\sigma(-\alpha_t)} \nabla_{\boldsymbol{x}_t} \log p(\boldsymbol{x}_t) \).
The training loss \( L(\phi) \) is defined as:
\begin{align}
    \small{L(\phi)}\ 
    &\small{= \mathbb{E}_{t, \boldsymbol{x}_t} \left[ w_t\left\| \boldsymbol{\epsilon}_\phi(\boldsymbol{x}_t, \alpha_t) + \sqrt{\sigma(-\alpha_t)} \nabla_{\boldsymbol{x}_t} \log p(\boldsymbol{x}_t) \right\|^2_2 \right]}\\
    &\small{= \mathbb{E}_{t, \boldsymbol{x}_0, \boldsymbol{\epsilon}} \left[w_t \left\| \boldsymbol{\epsilon}_\phi(\boldsymbol{x}_t, \alpha_t) - \boldsymbol{\epsilon} \right\|^2_2 \right] + C}
\end{align}
where \(\boldsymbol{x}_0 \sim  p(\boldsymbol{x}_0)\), \( \boldsymbol{\epsilon} \sim \mathcal{N}(\boldsymbol{0},  \boldsymbol{I}) \), \(t \sim \mathcal{U}([0,1])\), \( \boldsymbol{x}_t = \sqrt{\sigma(\alpha_t)} \boldsymbol{x}_0 + \sqrt{\sigma(-\alpha_t)} \boldsymbol{\epsilon} \), \(w_t\) is a weighting function and usually set to \(w_t \equiv 1\), and \( C \) is a constant independent of \( \phi \). In this setup, the network \( \boldsymbol{\epsilon}_\phi(\boldsymbol{x}_t, \alpha_t) \) target at predicting the expectation of noise added to the noisy sample \( \boldsymbol{x}_t \), and is therefore called the noise prediction network.
Minimizing the loss function \( L(\phi) \) results in the following relationship:
\begin{equation}
    \nabla_{\boldsymbol{x}_t} \log p(\boldsymbol{x}_t) = -\frac{\boldsymbol{\epsilon}_\phi(\boldsymbol{x}_t, \alpha_t)}{\sqrt{\sigma(-\alpha_t)}}.
\end{equation}
Then we can solve the probability flow ODE in Equation \ref{PFODE} with the assistance of existing ODE solvers \cite{ho2020denoising,songdenoising,lu2022dpm,karras2022elucidating,zheng2023dpm} to generate data samples.

\section{Methodology}
In the soft actor-critic algorithms, Gaussian policies have become the most widely used class of policy representation due to their simplicity and efficiency. Although Gaussian policies perform well in relatively simple single-goal RL environments, they often struggle with more complex multi-goal tasks.

Consider a typical RL task that involves multiple behavior modes. The most efficient solution is to explore all behavior modes until one obviously outperforms the others. However, this exploration strategy is difficult to achieve with Gaussian policies. In the training process of a soft actor-critic algorithm with Gaussian policies, minimizing the KL divergence between the Gaussian policy and the exponential of the Q-function—which is often multimodal in multi-goal tasks—tends to push the Gaussian policy to allocate most of the probability mass to the action region with the highest Q value \cite{chenscore}. Consequently, other promising action regions with slightly lower Q values will be neglected, which may cause the agent to become stuck at a local optimal policy.

However, an efficient exploration strategy can be achieved by replacing the Gaussian policy with a more expressive policy representation class. If accurately fitting the multimodal target policy (i.e., the exponential of the Q-function), the agent will explore all high-return action regions at a high probability, thus reducing the risk of converging to a local optimum. Moreover, recall that when the assumptions on loss optimization are met, the soft actor-critic algorithm is guaranteed to converge to the optimal maximum entropy policy within the chosen policy class. Therefore, with sufficient network capacity and appropriate optimization techniques, we can obtain the true optimal maximum entropy policy, as long as the selected policy representation class is expressive enough to capture it.

The above analysis emphasizes the importance of applying an expressive policy class to achieve efficient exploration as well as a higher performance upper bound. Since diffusion models have demonstrated remarkable performance in capturing complex multimodal distributions, we adopt them to represent the policy within the soft actor-critic framework. However, integrating a diffusion-based policy into the soft actor-critic algorithm presents several challenges:
(1) In the policy improvement step, the new diffusion policy is updated to approximate the exponential of the Q-function. However, existing methods for training diffusion models rely on samples from the target distribution, which are unavailable in this case.
(2) In the policy evaluation step, computing the soft Q-function requires access to the probability of the diffusion policy.  Nevertheless, diffusion models implicitly model data distributions by estimating their score functions, making it intractable to compute the exact probability.

The remainder of this section addresses these challenges and describes how to incorporate diffusion models into the soft actor-critic algorithm for efficient policy learning. We first propose the Q-weighted Noise Estimation approach to fit the exponential of the Q-function in Section \ref{Q-weighted Noise Estimation}, then introduce a method for probability approximation in diffusion policies in Section \ref{Probability Estimation}, and finally present the complete algorithm in Section \ref{MaxEnrRLDP}. We name this method MaxEntDP because it can fulfill the MaxEnt RL objective with diffusion policies.

\subsection{Q-weighted Noise Estimation}
\label{Q-weighted Noise Estimation}
Given a Q-function $Q(\boldsymbol{s},\boldsymbol{a})$, below we will analyze how to train a noise prediction network $\boldsymbol{\epsilon}_{\phi}$ in the diffusion model to approximate the target distribution:
\begin{equation}
\label{target distribution}
    \pi(\boldsymbol{a}|\boldsymbol{s})=\frac{\exp(\frac{1}{\beta}Q(\boldsymbol{s},\boldsymbol{a}))}{Z(\boldsymbol{s})}.
\end{equation}
Omitting the state in the condition for simplicity and following the symbol convention of diffusion models, we rewrite $\pi(\boldsymbol{a}|\boldsymbol{s})$ as $p(\boldsymbol{a}_0)$. The transition from the original action samples $\boldsymbol{a}_0$ at time $t=0$ to the noisy actions $\boldsymbol{a}_t$ at time $t \in [0,1]$ is defined as:
\begin{equation}
\label{transition distribution}
    p(\boldsymbol{a}_t|\boldsymbol{a}_0) = \mathcal{N}(\boldsymbol{a}_t | \sqrt{\sigma(\alpha_t)} \boldsymbol{a}_0, \sigma(-\alpha_t) \boldsymbol{I})
\end{equation}
Note that the symbol $t$ stands for the time of diffusion models if not specified. 

The marginal distribution of noisy actions $\boldsymbol{a}_t$ at time $t$ is denoted by $p(\boldsymbol{a}_t)$. To sample from $p(\boldsymbol{a}_0)$, we need to estimate the score function $\nabla_{\boldsymbol{a}_t} \log p(\boldsymbol{a}_t)$ at each intermediate time $t$ during the diffusion process. The score function can be reformulated as:
\begin{equation}
\label{score function}
    \nabla_{\boldsymbol{a}_t} \log p(\boldsymbol{a}_t) = \mathbb{E}_{p(\boldsymbol{a}_0|\boldsymbol{a}_t)}\left[\nabla_{\boldsymbol{a}_t} \log p(\boldsymbol{a}_t|\boldsymbol{a}_0)\right],
\end{equation}
which is an expectation with respect to the conditional distribution $p(\boldsymbol{a}_0|\boldsymbol{a}_t)$, a.k.a. the reverse transition distribution of the diffusion process. If samples from $p(\boldsymbol{a}_0)$ are available, as is often the case in the application scenarios of diffusion models \cite{saharia2022photorealistic,ho2022video,chi2023diffusion,xu2023dream3d,huang2023make}, we can first sample original actions $\boldsymbol{a}_0 \sim p(\boldsymbol{a}_0)$, and then sample noisy actions $\boldsymbol{a}_t \sim p(\boldsymbol{a}_t|\boldsymbol{a}_0)$ to obtain several sample pairs following the joint distribution $p(\boldsymbol{a}_0, \boldsymbol{a}_t)$. Then for a fixed noisy action $\boldsymbol{a}_t$, the corresponding $\boldsymbol{a}_0$ will conform the conditional distribution $p(\boldsymbol{a}_0|\boldsymbol{a}_t)$, which can serve as Monte Carlo samples to estimate the expectation in Equation \ref{score function}. Conversely, in the context of the soft actor-critic algorithm, we lack samples from the target distribution $p(\boldsymbol{a}_0)$ but instead have access to a Q-function. Therefore, we must establish the relationship between the conditional distribution $p(\boldsymbol{a}_0|\boldsymbol{a}_t)$ and the Q-function. 

\begin{lemma} 
\label{lemma::decomposition}
\textbf{(Decomposition of the Reverse Transition Distribution)}
    The conditional distribution $ p(\boldsymbol{a}_0|\boldsymbol{a}_t)$ can be decomposed as 
\begin{equation}
\resizebox{0.89\linewidth}{!}{$
    p(\boldsymbol{a}_0|\boldsymbol{a}_t) \propto \exp(\frac{1}{\beta}Q(\boldsymbol{a}_0)) 
     \mathcal{N}(\boldsymbol{a}_0 | \frac{1}{\sqrt{\sigma(\alpha_t)}} \boldsymbol{a}_t, \frac{\sigma(-\alpha_t)}{\sigma(\alpha_t)} \boldsymbol{I})
$}
\end{equation}    
\end{lemma}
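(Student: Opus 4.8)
The plan is to derive the decomposition directly from Bayes' rule combined with a standard Gaussian reparameterization; no limiting or convergence argument is required. First I would fix the noisy action $\boldsymbol{a}_t$ and write
\begin{equation*}
    p(\boldsymbol{a}_0|\boldsymbol{a}_t) = \frac{p(\boldsymbol{a}_t|\boldsymbol{a}_0)\, p(\boldsymbol{a}_0)}{p(\boldsymbol{a}_t)} \propto p(\boldsymbol{a}_t|\boldsymbol{a}_0)\, p(\boldsymbol{a}_0),
\end{equation*}
where the marginal $p(\boldsymbol{a}_t)$ is absorbed into the proportionality constant because it does not depend on $\boldsymbol{a}_0$. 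Substituting the target distribution from \cref{target distribution}, the factor $p(\boldsymbol{a}_0)$ contributes $\exp(\tfrac{1}{\beta}Q(\boldsymbol{a}_0))$, with the normalizer $Z(\boldsymbol{s})$ likewise folded into the constant.

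The one genuine computation is to re-express the forward kernel $p(\boldsymbol{a}_t|\boldsymbol{a}_0) = \mathcal{N}\bigl(\boldsymbol{a}_t \mid \sqrt{\sigma(\alpha_t)}\,\boldsymbol{a}_0, \sigma(-\alpha_t)\boldsymbol{I}\bigr)$ as a function of $\boldsymbol{a}_0$. Its exponent is $-\tfrac{1}{2\sigma(-\alpha_t)}\bigl\|\boldsymbol{a}_t - \sqrt{\sigma(\alpha_t)}\,\boldsymbol{a}_0\bigr\|_2^2$, and pulling the scalar $\sqrt{\sigma(\alpha_t)}$ out of the norm rewrites it as $-\tfrac{\sigma(\alpha_t)}{2\sigma(-\alpha_t)}\bigl\|\boldsymbol{a}_0 - \tfrac{1}{\sqrt{\sigma(\alpha_t)}}\boldsymbol{a}_t\bigr\|_2^2$. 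Up to a normalization factor independent of $\boldsymbol{a}_0$, this is exactly the density $\mathcal{N}\bigl(\boldsymbol{a}_0 \mid \tfrac{1}{\sqrt{\sigma(\alpha_t)}}\boldsymbol{a}_t, \tfrac{\sigma(-\alpha_t)}{\sigma(\alpha_t)}\boldsymbol{I}\bigr)$. Multiplying this Gaussian factor by $\exp(\tfrac{1}{\beta}Q(\boldsymbol{a}_0))$ and collecting every $\boldsymbol{a}_0$-independent term into the proportionality constant gives precisely the claimed identity.

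I do not anticipate a real obstacle: the lemma is essentially Bayes' rule plus completing the square. The only point needing care is bookkeeping — one must confirm that each multiplicative term pushed into the ``$\propto$'' (namely $p(\boldsymbol{a}_t)$, $Z(\boldsymbol{s})$, and the ratio of the two Gaussian normalizing constants, $\bigl(2\pi\sigma(-\alpha_t)\bigr)^{-d/2}$ against $\bigl(2\pi\sigma(-\alpha_t)/\sigma(\alpha_t)\bigr)^{-d/2}$ with $d=\dim\mathcal{A}$) depends only on $\boldsymbol{a}_t$ (and on $t$ and the suppressed state) but never on $\boldsymbol{a}_0$, so that the proportionality is a legitimate identity between probability densities in $\boldsymbol{a}_0$. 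Since $Q$ is bounded and the Gaussian factor is integrable, the implied constant is finite and the decomposition is well defined; writing it out explicitly is optional and unnecessary for the subsequent Monte Carlo estimate of the score in \cref{score function}.
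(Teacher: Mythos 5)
Your proposal is correct and follows essentially the same route as the paper's Appendix proof: apply Bayes' rule, absorb the $\boldsymbol{a}_0$-independent factors ($p(\boldsymbol{a}_t)$, $Z(\boldsymbol{s})$, Gaussian normalizers) into the proportionality constant, and re-express the forward Gaussian kernel as a Gaussian density in $\boldsymbol{a}_0$. Your explicit rewriting of the exponent is just a more detailed version of the paper's observation that the two Gaussian densities agree up to a constant compensating for the scale change.
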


 The proof is provided in Appendix \ref{appendix::condition distribution}. Lemma \ref{lemma::decomposition} demonstrates that the conditional distribution $p(\boldsymbol{a}_0|\boldsymbol{a}_t)$ can be seen as a Gaussian distribution of $\boldsymbol{a}_0$ weighted by the exponential of the Q-function. Sampling from the Gaussian distribution is straightforward, we can apply importance sampling \cite{bishop2006patterm} to estimate the expectation in Equation \ref{score function}.

\begin{theorem}
    \textbf{(Importance Sampling Estimate for the Score Function)} The score function can be estimated by
\begin{align}
    \nabla_{\boldsymbol{a}_t} \log p(\boldsymbol{a}_t) \approx \frac{1}{\sqrt{\sigma(-\alpha_t)}} \cdot\frac{1}{K} \sum_{i=1}^K w(\boldsymbol{a}_0^i) \boldsymbol{\epsilon}^i,
\end{align}
where \small{$\boldsymbol{\epsilon}^1, \dots, \boldsymbol{\epsilon}^K \sim \mathcal{N}(\boldsymbol{0}, \boldsymbol{I})$, $\boldsymbol{a}_0^i=\frac{1}{\sqrt{\sigma(\alpha_t)}} \boldsymbol{a}_t + \frac{\sqrt{\sigma(-\alpha_t)}}{\sqrt{\sigma(\alpha_t)}}\boldsymbol{\epsilon}^i$} \normalsize{and the importance ratio $w(\boldsymbol{a}_0)=\frac{\exp(\frac{1}{\beta}Q(\boldsymbol{a}_0))}{Z(\boldsymbol{a}_t)}$ with $Z(\boldsymbol{a}_t)$ being the normalizing constant of $p(\boldsymbol{a}_0|\boldsymbol{a}_t)$}.

\end{theorem}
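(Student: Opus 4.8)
The plan is to start from the reformulation of the score given in Equation \eqref{score function}, $\nabla_{\boldsymbol{a}_t}\log p(\boldsymbol{a}_t) = \mathbb{E}_{p(\boldsymbol{a}_0|\boldsymbol{a}_t)}[\nabla_{\boldsymbol{a}_t}\log p(\boldsymbol{a}_t|\boldsymbol{a}_0)]$, and turn the intractable expectation over the reverse transition $p(\boldsymbol{a}_0|\boldsymbol{a}_t)$ into an expectation over the tractable Gaussian factor isolated in Lemma \ref{lemma::decomposition}, then discretize by Monte Carlo. Concretely, I would first differentiate the forward kernel \eqref{transition distribution}: since $p(\boldsymbol{a}_t|\boldsymbol{a}_0)=\mathcal{N}(\boldsymbol{a}_t\mid\sqrt{\sigma(\alpha_t)}\boldsymbol{a}_0,\sigma(-\alpha_t)\boldsymbol{I})$, a routine Gaussian computation gives $\nabla_{\boldsymbol{a}_t}\log p(\boldsymbol{a}_t|\boldsymbol{a}_0) = -\frac{1}{\sigma(-\alpha_t)}\bigl(\boldsymbol{a}_t - \sqrt{\sigma(\alpha_t)}\boldsymbol{a}_0\bigr)$.

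Next I would apply the change of measure. By Lemma \ref{lemma::decomposition} we may write $p(\boldsymbol{a}_0|\boldsymbol{a}_t) = w(\boldsymbol{a}_0)\,q(\boldsymbol{a}_0)$, where $q(\boldsymbol{a}_0)=\mathcal{N}\bigl(\boldsymbol{a}_0\mid \frac{1}{\sqrt{\sigma(\alpha_t)}}\boldsymbol{a}_t,\frac{\sigma(-\alpha_t)}{\sigma(\alpha_t)}\boldsymbol{I}\bigr)$ is the proposal and $w(\boldsymbol{a}_0)=\exp(\frac{1}{\beta}Q(\boldsymbol{a}_0))/Z(\boldsymbol{a}_t)$ is the importance ratio, with $Z(\boldsymbol{a}_t)$ the normalizing constant of $p(\boldsymbol{a}_0|\boldsymbol{a}_t)$. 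Hence $\mathbb{E}_{p(\boldsymbol{a}_0|\boldsymbol{a}_t)}[\nabla_{\boldsymbol{a}_t}\log p(\boldsymbol{a}_t|\boldsymbol{a}_0)] = \mathbb{E}_{q}\bigl[w(\boldsymbol{a}_0)\,\nabla_{\boldsymbol{a}_t}\log p(\boldsymbol{a}_t|\boldsymbol{a}_0)\bigr]$, which I estimate with $K$ i.i.d.\ draws $\boldsymbol{a}_0^i\sim q$ via the standard importance-sampling average $\frac{1}{K}\sum_{i=1}^K w(\boldsymbol{a}_0^i)\nabla_{\boldsymbol{a}_t}\log p(\boldsymbol{a}_t|\boldsymbol{a}_0^i)$.

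The final step is the reparametrization that produces the clean form. Writing each sample as $\boldsymbol{a}_0^i = \frac{1}{\sqrt{\sigma(\alpha_t)}}\boldsymbol{a}_t + \frac{\sqrt{\sigma(-\alpha_t)}}{\sqrt{\sigma(\alpha_t)}}\boldsymbol{\epsilon}^i$ with $\boldsymbol{\epsilon}^i\sim\mathcal{N}(\boldsymbol{0},\boldsymbol{I})$ realizes a draw from $q$ (this is exactly the $\boldsymbol{a}_0^i$ in the statement), and substituting into the gradient from the first step makes the $\boldsymbol{a}_t$ terms cancel: $\boldsymbol{a}_t - \sqrt{\sigma(\alpha_t)}\boldsymbol{a}_0^i = -\sqrt{\sigma(-\alpha_t)}\boldsymbol{\epsilon}^i$, so $\nabla_{\boldsymbol{a}_t}\log p(\boldsymbol{a}_t|\boldsymbol{a}_0^i) = \frac{1}{\sqrt{\sigma(-\alpha_t)}}\boldsymbol{\epsilon}^i$. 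Plugging this into the importance-sampling average and factoring out the scalar yields the claimed estimate $\nabla_{\boldsymbol{a}_t}\log p(\boldsymbol{a}_t)\approx \frac{1}{\sqrt{\sigma(-\alpha_t)}}\cdot\frac{1}{K}\sum_{i=1}^K w(\boldsymbol{a}_0^i)\boldsymbol{\epsilon}^i$.

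Given Lemma \ref{lemma::decomposition}, the computation is short, so the real content is mostly bookkeeping; the one point deserving care is the importance ratio's dependence on the intractable normalizer $Z(\boldsymbol{a}_t)=\mathbb{E}_{q}[\exp(\frac{1}{\beta}Q(\boldsymbol{a}_0))]$. I would treat $Z(\boldsymbol{a}_t)$ as an exact constant for the statement (so the displayed average is an unbiased estimator of the score, matching the ``$\approx$'' as a Monte Carlo approximation), and separately remark that in practice one replaces it by its Monte Carlo estimate from the same $\boldsymbol{\epsilon}^i$, giving a self-normalized estimator that is biased but consistent. I would also note for completeness that the importance-sampling step is legitimate because $q$ dominates $p(\boldsymbol{a}_0|\boldsymbol{a}_t)$ (they share the same Gaussian factor) and because boundedness of $r$, hence of $Q$, keeps the weighted integrand's second moment finite, controlling the estimator variance.
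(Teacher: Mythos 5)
Your proof is correct and follows essentially the same route as the paper's derivation in Appendix B.3: it rewrites the score as an expectation over $p(\boldsymbol{a}_0|\boldsymbol{a}_t)$, uses Lemma \ref{lemma::decomposition} to change measure to the Gaussian proposal with importance ratio $w(\boldsymbol{a}_0)$, reparametrizes $\boldsymbol{a}_0$ via $\boldsymbol{\epsilon}$ so that $\nabla_{\boldsymbol{a}_t}\log p(\boldsymbol{a}_t|\boldsymbol{a}_0^i)=\boldsymbol{\epsilon}^i/\sqrt{\sigma(-\alpha_t)}$, and then applies Monte Carlo averaging. Your added remarks on treating $Z(\boldsymbol{a}_t)$ as exact (unbiasedness) versus self-normalizing it in practice are consistent with the paper's discussion surrounding the weighted importance sampling variant.
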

 
 The derivation is detailed in Appendix \ref{appendix::importance sampling}. Although this importance sampling estimate is unbiased, it exhibits high variance when the variance of the Q-function is large. 
To address this issue, we employ the weighted importance sampling approach \cite{bishop2006patterm} to reduce variance and stabilize the training process.

\begin{theorem}
    \textbf{(Weighted Importance Sampling Estimate for the Score Function)} The score function can be estimated by 
\begin{align}
\hspace{-1em}
    \small{\nabla_{\boldsymbol{a}_t} \log p(\boldsymbol{a}_t)}\ 
    &\small{\approx \frac{1}{\sqrt{\sigma(-\alpha_t)}} \cdot \sum_{i=1}^K \frac{w(\boldsymbol{a}_0^i)}{\sum_{j=1}^K w(\boldsymbol{a}_0^j)} \boldsymbol{\epsilon}^i}\\
    \label{equation::weighted noise}
    &\small{=\frac{1}{\sqrt{\sigma(-\alpha_t)}} \sum_{i=1}^K \textnormal{softmax}(\frac{1}{\beta}Q(\boldsymbol{a}_0^{1:K}))_i\boldsymbol{\epsilon}^i,}
\end{align}
where $\textnormal{softmax}(\frac{1}{\beta}Q(\boldsymbol{a}_0^{1:K}))_i=\frac{\exp(\frac{1}{\beta}Q(\boldsymbol{a}_0^i))}{\sum^K_{j=1}\exp(\frac{1}{\beta}Q(\boldsymbol{a}_0^j))}$.
\end{theorem}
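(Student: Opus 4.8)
\emph{Proof proposal.} The plan is to derive the weighted estimator from the unbiased importance-sampling estimator of the preceding theorem by applying self-normalized (weighted) importance sampling, and then to observe that the intractable normalizer $Z(\boldsymbol{a}_t)$ cancels, leaving the softmax form. I would start from Equation \ref{score function}, $\nabla_{\boldsymbol{a}_t}\log p(\boldsymbol{a}_t) = \mathbb{E}_{p(\boldsymbol{a}_0|\boldsymbol{a}_t)}[\nabla_{\boldsymbol{a}_t}\log p(\boldsymbol{a}_t|\boldsymbol{a}_0)]$, and take the Gaussian factor of Lemma \ref{lemma::decomposition} as the proposal $q(\boldsymbol{a}_0)=\mathcal{N}(\boldsymbol{a}_0\mid \tfrac{1}{\sqrt{\sigma(\alpha_t)}}\boldsymbol{a}_t,\ \tfrac{\sigma(-\alpha_t)}{\sigma(\alpha_t)}\boldsymbol{I})$; drawing $\boldsymbol{\epsilon}^i\sim\mathcal{N}(\boldsymbol{0},\boldsymbol{I})$ and setting $\boldsymbol{a}_0^i=\tfrac{1}{\sqrt{\sigma(\alpha_t)}}\boldsymbol{a}_t+\tfrac{\sqrt{\sigma(-\alpha_t)}}{\sqrt{\sigma(\alpha_t)}}\boldsymbol{\epsilon}^i$ realizes $q$, and by Lemma \ref{lemma::decomposition} the density ratio is $p(\boldsymbol{a}_0|\boldsymbol{a}_t)/q(\boldsymbol{a}_0)=w(\boldsymbol{a}_0)=\exp(\tfrac{1}{\beta}Q(\boldsymbol{a}_0))/Z(\boldsymbol{a}_t)$.

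Next I would differentiate $\log p(\boldsymbol{a}_t|\boldsymbol{a}_0)$ from Equation \ref{transition distribution}: since $\log p(\boldsymbol{a}_t|\boldsymbol{a}_0)=-\|\boldsymbol{a}_t-\sqrt{\sigma(\alpha_t)}\boldsymbol{a}_0\|_2^2/(2\sigma(-\alpha_t))+\mathrm{const}$, we get $\nabla_{\boldsymbol{a}_t}\log p(\boldsymbol{a}_t|\boldsymbol{a}_0)=-(\boldsymbol{a}_t-\sqrt{\sigma(\alpha_t)}\boldsymbol{a}_0)/\sigma(-\alpha_t)$, and substituting $\boldsymbol{a}_0=\boldsymbol{a}_0^i$ (so that $\sqrt{\sigma(\alpha_t)}\boldsymbol{a}_0^i=\boldsymbol{a}_t+\sqrt{\sigma(-\alpha_t)}\boldsymbol{\epsilon}^i$) collapses this to $\boldsymbol{\epsilon}^i/\sqrt{\sigma(-\alpha_t)}$ — the same simplification already used in the previous theorem. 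The ordinary estimator is thus $\tfrac{1}{K}\sum_i w(\boldsymbol{a}_0^i)\,\boldsymbol{\epsilon}^i/\sqrt{\sigma(-\alpha_t)}$, and the weighted version replaces it by $\big(\sum_i w(\boldsymbol{a}_0^i)\boldsymbol{\epsilon}^i\big)\big/\big(\sum_j w(\boldsymbol{a}_0^j)\big)\cdot 1/\sqrt{\sigma(-\alpha_t)}$. Finally, since $w(\boldsymbol{a}_0^i)=\exp(\tfrac{1}{\beta}Q(\boldsymbol{a}_0^i))/Z(\boldsymbol{a}_t)$, the common factor $1/Z(\boldsymbol{a}_t)$ cancels in numerator and denominator, giving $w(\boldsymbol{a}_0^i)/\sum_j w(\boldsymbol{a}_0^j)=\exp(\tfrac{1}{\beta}Q(\boldsymbol{a}_0^i))/\sum_j \exp(\tfrac{1}{\beta}Q(\boldsymbol{a}_0^j))=\textnormal{softmax}(\tfrac{1}{\beta}Q(\boldsymbol{a}_0^{1:K}))_i$, which is the claimed expression.

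The main obstacle is not any single computation but justifying the self-normalization step: I need to show $\mathbb{E}_q[w(\boldsymbol{a}_0)]=\tfrac{1}{Z(\boldsymbol{a}_t)}\int \exp(\tfrac{1}{\beta}Q(\boldsymbol{a}_0))\,q(\boldsymbol{a}_0)\,\mathrm{d}\boldsymbol{a}_0=1$, which holds by the very definition of the normalizing constant $Z(\boldsymbol{a}_t)$ in Lemma \ref{lemma::decomposition}, so that $\tfrac{1}{K}\sum_j w(\boldsymbol{a}_0^j)\to 1$ by the law of large numbers and Slutsky's theorem forces the weighted estimator to share the almost-sure limit $\nabla_{\boldsymbol{a}_t}\log p(\boldsymbol{a}_t)$ of the unbiased one. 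It is worth being explicit here that, unlike the estimator of the previous theorem, the weighted estimator is only consistent (biased at finite $K$) — the trade of this bias against substantially reduced variance being precisely the reason for introducing it — while the cancellation of $Z(\boldsymbol{a}_t)$, which makes the estimator computable from the $Q$-function alone, comes for free.
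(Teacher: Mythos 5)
Your proposal is correct and follows essentially the same route as the paper: derive the unbiased importance-sampling estimator with the Gaussian factor of Lemma \ref{lemma::decomposition} as the proposal and $w(\boldsymbol{a}_0)=\exp(\frac{1}{\beta}Q(\boldsymbol{a}_0))/Z(\boldsymbol{a}_t)$ as the ratio, then self-normalize so that $Z(\boldsymbol{a}_t)$ cancels into the softmax form. Your explicit consistency argument ($\mathbb{E}_q[w]=1$, law of large numbers, Slutsky) and the remark that the estimator is biased at finite $K$ only spell out what the paper delegates to its citation of standard weighted importance sampling and its comment that the bias shrinks as $K$ grows.
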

The normalizing constant $Z(\boldsymbol{a}_t)$ is canceled out in Equation \ref{equation::weighted noise}, eliminating the need for its explicit computation. Since the bias of the weighted importance sampling method decreases as the number of Monte Carlo samples increases, a larger value of $K$ is preferred in practice given adequate computation budgets. 

Then the training target of the noise prediction network is 
\begin{align}
    \boldsymbol{\epsilon}^*(\boldsymbol{a}_t, \alpha_t)
    &=-\sqrt{\sigma(-\alpha_t)}\nabla_{\boldsymbol{a}_t} \log p(\boldsymbol{a}_t)\\
    \label{equation::noise target}
    &\approx - \sum_{i=1}^K \text{softmax}(\frac{1}{\beta}Q(\boldsymbol{a}_0^{1:K}))_i\boldsymbol{\epsilon}^i,
\end{align}
This target can be interpreted as a weighted sum of noise, with the weights being the exponential of the Q-value. Consequently, we refer to this method as Q-weighted Noise Estimation for training the noise prediction network. The overall training loss is
\begin{equation}
\label{equation::policy training loss}
L(\phi)=\mathbb{E}_{p(\boldsymbol{a}_t)}\left[\parallel\boldsymbol{\epsilon}_{\phi}(\boldsymbol{a}_t, \alpha_t)-\boldsymbol{\epsilon}^*(\boldsymbol{a}_t, \alpha_t)\parallel^2_2\right]
\end{equation}
While the true distribution of noisy actions $p(\boldsymbol{a}_t)$ may be inaccessible, we can substitute it with other distributions with full support, as the loss will still be minimized for each $\boldsymbol{a}_t$ given sufficient network capacity.

We briefly compare our method with two previous approaches that approximate the exponential of a given function $Q(a)$. The QSM method \cite{psenkalearning} estimates the score function as $\nabla_{\boldsymbol{a}_t} \log p(\boldsymbol{a}_t) \approx \nabla_{\boldsymbol{a}_t} \
\frac{1}{\beta}Q(\boldsymbol{a}_t)$. This approximation requires $p(\boldsymbol{a}_t)\propto \exp (\frac{1}{\beta}Q(\boldsymbol{a}_t))$, which is true only when the time $t$ is close to $0$. Therefore, the score function estimation in QSM is imprecise for most values of $t$. Another method iDEM \cite{akhounditerated} proposes $\nabla_{\boldsymbol{a}_t} \log p(\boldsymbol{a}_t)\approx \frac{1}{\sqrt{\sigma(\alpha_t)}} \sum_{i=1}^K \text{softmax}(\frac{1}{\beta}Q(\boldsymbol{a}_0^{1:K}))_i \nabla_{\boldsymbol{a}_0^i} \frac{1}{\beta} Q(\boldsymbol{a}_0^i)$, and the derivation is included in Appendix \ref{appendix::iDEM} for completion. Although the expressions of iDEM and our method appear similar and both can approach the true score function as $K \to \infty$, our method does not require computing the gradient of the Q-function, which is more computationally efficient, especially when the Q-function is evaluated on a neural network. Furthermore, the experiments in Section \ref{section::cmp_eval} demonstrate that the variance of the score estimation in our method is significantly lower than the other two methods that rely on gradient computation, leading to a more stable training process.

\subsection{Probability Approximation of Diffusion Policy}
\label{Probability Estimation}
Diffusion models approximate the desired distributions by estimating their score function. Although this implicit modeling enhances the expressiveness of the model, enabling it to approximate any distribution with a differentiable probability density function, it also introduces challenges in computing the exact likelihood of the distribution.

Previous study \cite{konginformation,wu2024your} proved that the log-likelihood of $p(\boldsymbol{a}_0)$ can be written exactly as an expression that depends only on the true noise prediction target, i.e.,
\begin{equation}
\label{equation::exact probability}
\resizebox{0.89\linewidth}{!}{$
    \log p(\boldsymbol{a}_0)= c - \frac{1}{2} \int_{-\infty}^{+\infty}\mathbb{E}_{\boldsymbol{\epsilon} }\left[\parallel \boldsymbol{\epsilon} - \boldsymbol{\epsilon}^*(\boldsymbol{a}_t, \alpha_t) \parallel^2_2 \right]\text{d}\alpha_t
$}
\end{equation}
where $c=-\frac{d}{2}\log (2\pi e)+\frac{d}{2} \int_{-\infty}^{+\infty} \sigma(\alpha_t) \text{d}\alpha_t$ with $d$ being the dimension of $\boldsymbol{a}_0$, $\boldsymbol{\epsilon} \sim \mathcal{N}(\boldsymbol{0}, \boldsymbol{I})$, $\boldsymbol{a}_t = \sqrt{\sigma(\alpha_t)} \boldsymbol{a}_0 + \sqrt{\sigma(-\alpha_t)} \boldsymbol{\epsilon}$, and $\boldsymbol{\epsilon}^*(\boldsymbol{a}_t, \alpha_t)=-\sqrt{\sigma(-\alpha_t)}\nabla_{\boldsymbol{a}_t} \log p(\boldsymbol{a}_t)$ is the training target of the noise prediction network. 

\begin{corollary} \textbf{(The Exact Probability of Diffusion Policy)}
    Let $\boldsymbol{\epsilon}_{\phi}$ be a well-trained noise prediction network, i.e., it can induce a probability density function $p_{\phi}(\boldsymbol{a}_0)$ satisfying $\boldsymbol{\epsilon}_{\phi}(\boldsymbol{a}_t, \alpha_t)=-\sqrt{\sigma(-\alpha_t)}\nabla_{\boldsymbol{a}_t} \log p_{\phi}(\boldsymbol{a}_t)$, then 
\begin{equation}
\resizebox{0.89\linewidth}{!}{$
    \log p_{\phi}(\boldsymbol{a}_0)= c - \frac{1}{2} \int_{-\infty}^{+\infty}\mathbb{E}_{\boldsymbol{\epsilon} }\left[\parallel \boldsymbol{\epsilon} - \boldsymbol{\epsilon}_{\phi}(\boldsymbol{a}_t, \alpha_t) \parallel^2_2 \right]\text{d}\alpha_t
$}
\end{equation}
\end{corollary}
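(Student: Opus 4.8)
The plan is to derive this statement directly from Equation~\ref{equation::exact probability}, which is an identity valid for \emph{any} distribution possessing a differentiable density; the only content of the corollary is to observe that the hypothesis on $\boldsymbol{\epsilon}_{\phi}$ places the pair $(p_{\phi},\boldsymbol{\epsilon}_{\phi})$ in exactly the position occupied by $(p,\boldsymbol{\epsilon}^*)$ in that identity. Recall that Equation~\ref{equation::exact probability} states: if $p(\boldsymbol{a}_0)$ is a density and $\boldsymbol{\epsilon}^*(\boldsymbol{a}_t,\alpha_t)=-\sqrt{\sigma(-\alpha_t)}\,\nabla_{\boldsymbol{a}_t}\log p(\boldsymbol{a}_t)$ is the true noise prediction target of the VP diffusion applied to $p$, then $\log p(\boldsymbol{a}_0)$ equals $c$ minus $\tfrac{1}{2}\int\mathbb{E}_{\boldsymbol{\epsilon}}\!\left[\,\|\boldsymbol{\epsilon}-\boldsymbol{\epsilon}^*(\boldsymbol{a}_t,\alpha_t)\|_2^2\,\right]\mathrm{d}\alpha_t$, where $c=-\tfrac{d}{2}\log(2\pi e)+\tfrac{d}{2}\int\sigma(\alpha_t)\,\mathrm{d}\alpha_t$ depends only on the action dimension $d$ and the noise schedule $\alpha_t$, and in particular is the same for every choice of $p$.

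First I would unpack the ``well-trained'' hypothesis: by assumption $\boldsymbol{\epsilon}_{\phi}$ induces a density $p_{\phi}(\boldsymbol{a}_0)$ whose diffused marginals $p_{\phi}(\boldsymbol{a}_t)$, obtained by pushing $p_{\phi}(\boldsymbol{a}_0)$ through the transition kernel in Equation~\ref{transition distribution}, satisfy $\boldsymbol{\epsilon}_{\phi}(\boldsymbol{a}_t,\alpha_t)=-\sqrt{\sigma(-\alpha_t)}\,\nabla_{\boldsymbol{a}_t}\log p_{\phi}(\boldsymbol{a}_t)$ for all $t\in[0,1]$; this is precisely the assertion that $\boldsymbol{\epsilon}_{\phi}$ \emph{is} the true noise prediction target of $p_{\phi}$. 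Substituting $p\leftarrow p_{\phi}$ and $\boldsymbol{\epsilon}^*\leftarrow\boldsymbol{\epsilon}_{\phi}$ into Equation~\ref{equation::exact probability}, with the same $\boldsymbol{a}_t=\sqrt{\sigma(\alpha_t)}\boldsymbol{a}_0+\sqrt{\sigma(-\alpha_t)}\boldsymbol{\epsilon}$, $\boldsymbol{\epsilon}\sim\mathcal{N}(\boldsymbol{0},\boldsymbol{I})$, and the same constant $c$ (which is unchanged, being independent of the distribution), immediately yields the claimed formula. For completeness I would also note that the hypotheses of Equation~\ref{equation::exact probability} hold for $p_{\phi}$: a diffused distribution has a smooth density for $t>0$, and $p_{\phi}(\boldsymbol{a}_0)$ is assumed to be a genuine probability density function, so the change-of-variables / Fokker--Planck argument behind Equation~\ref{equation::exact probability} (as in \cite{konginformation,wu2024your}) applies verbatim.

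There is essentially no computational obstacle; the only subtle point is a consistency requirement hidden in the word ``induce''. A priori an arbitrary network $\boldsymbol{\epsilon}_{\phi}(\cdot,\alpha_t)$ need not be, simultaneously for every $t$, the scaled score of the time-$t$ marginal of a single fixed density $p_{\phi}(\boldsymbol{a}_0)$: the family $\{\boldsymbol{\epsilon}_{\phi}(\cdot,\alpha_t)\}_{t}$ must be compatible with the continuity equation relating the marginals of the forward process. The corollary deliberately assumes this compatibility (the network is ``well-trained'' and ``can induce a probability density function $p_{\phi}$''), and under that assumption the identification with Equation~\ref{equation::exact probability} is exact, so nothing further is needed. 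In the write-up I would state this assumption plainly and then conclude in one line.
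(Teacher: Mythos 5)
Your proposal is correct and takes essentially the same route as the paper, which simply notes that the corollary is inferred from Equation \ref{equation::exact probability} by identifying the well-trained network $\boldsymbol{\epsilon}_{\phi}$ with the true noise prediction target of the induced density $p_{\phi}$. Your additional remark about the consistency hidden in the word ``induce'' is a reasonable clarification but does not change the argument.
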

This corollary can be inferred from Equation \ref{equation::exact probability}. However, this expression is intractable because both the integral in $c$ and the integral of the noise prediction error diverge, with only their difference converging \cite{konginformation}. We attempt to approximate the integral using numerical integration techniques. However, we observe that using the log SNR as the integration variable results in a high variance, as it spans from $-\infty$ to $+\infty$. Therefore, we instead utilize $\sigma(\alpha_t)$ with a narrower integration domain of $(0,1)$.

\begin{theorem}
    \textbf{(The Probability Approximation of Diffusion Policy)} The log probability of diffusion policy can be approximated by
\begin{equation}
\label{equation::numeracal integral}
\resizebox{0.89\linewidth}{!}{$
    \log p_{\phi}(\boldsymbol{a}_0)\approx c' + \frac{1}{2}\sum_{i=1}^T w_{t_i} \left( d \cdot \sigma(\alpha_{t_i}) - \tilde{\boldsymbol{\epsilon}}_{\phi}(\boldsymbol{a}_{t_i}, \alpha_{t_i})\right)
$}
\end{equation}
where $c'=-\frac{d}{2}\log (2\pi e)$, $t_{0:T}$ are uniformly spaced timesteps in $[t_{\text{min}},t_{\text{max}}]$, $w_{t_i}=\frac{\sigma(\alpha_{t_{i-1}}) - \sigma(\alpha_{t_{i}})}{\sigma(\alpha_{t_i})\sigma(-\alpha_{t_i})}$ is the weight at $t_i$, $\tilde{\boldsymbol{\epsilon}}_{\phi}(\boldsymbol{a}_{t_i}, \alpha_{t_i})=\frac{1}{N}\sum^N_{j=1}\parallel \boldsymbol{\epsilon}^j - \boldsymbol{\epsilon}_{\phi}(\boldsymbol{a}_{t_i}^j, \alpha_{t_i}) \parallel^2_2$ is the noise prediction error estimation at $t_i$.
\end{theorem}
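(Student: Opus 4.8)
The plan is to convert the exact but intractable identity of the Corollary into the stated finite sum via three successive reductions: (i) fuse the two individually divergent integrals into one convergent integral, (ii) change the integration variable from the log-SNR $\alpha_t$ to $\sigma(\alpha_t)$, and (iii) discretize both the resulting integral (a Riemann sum) and the remaining inner expectation (Monte Carlo).

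Substituting $c=-\tfrac d2\log(2\pi e)+\tfrac d2\int_{-\infty}^{+\infty}\sigma(\alpha_t)\,d\alpha_t$ into the Corollary and pulling the two integrals under one sign gives
\begin{equation*}
\log p_\phi(\boldsymbol a_0)=c'+\tfrac12\int_{-\infty}^{+\infty}\!\Big(d\,\sigma(\alpha_t)-\mathbb E_{\boldsymbol\epsilon}\big[\|\boldsymbol\epsilon-\boldsymbol\epsilon_\phi(\boldsymbol a_t,\alpha_t)\|_2^2\big]\Big)\,d\alpha_t,
\end{equation*}
with $c'=-\tfrac d2\log(2\pi e)$; this is legitimate once the combined integrand is shown to be absolutely integrable on $\mathbb R$, which follows — as noted in the text, cf.\ \cite{konginformation} — from the high-noise ($\alpha_t\to-\infty$) and low-noise ($\alpha_t\to+\infty$) asymptotics of a well-trained $\boldsymbol\epsilon_\phi$. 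Next, since $\tfrac{d}{d\alpha_t}\sigma(\alpha_t)=\sigma(\alpha_t)\big(1-\sigma(\alpha_t)\big)=\sigma(\alpha_t)\sigma(-\alpha_t)>0$, the substitution $u=\sigma(\alpha_t)$ is a smooth increasing bijection of $\mathbb R$ onto $(0,1)$ with $d\alpha_t=du/\big(\sigma(\alpha_t)\sigma(-\alpha_t)\big)$, yielding
\begin{equation*}
\log p_\phi(\boldsymbol a_0)=c'+\tfrac12\int_0^1\frac{d\,\sigma(\alpha_t)-\mathbb E_{\boldsymbol\epsilon}\big[\|\boldsymbol\epsilon-\boldsymbol\epsilon_\phi(\boldsymbol a_t,\alpha_t)\|_2^2\big]}{\sigma(\alpha_t)\sigma(-\alpha_t)}\,du,
\end{equation*}
the narrower-domain form promised in the text; the integrand extends boundedly to $u\in[0,1]$ by the same asymptotics.

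For the discretization, note that $\alpha_t$ is strictly decreasing in $t$ and $\sigma$ increasing, so the uniformly spaced timesteps $t_{\min}=t_0<\dots<t_T=t_{\max}$ induce a strictly decreasing sequence $\sigma(\alpha_{t_0})>\dots>\sigma(\alpha_{t_T})$ partitioning the window $[\sigma(\alpha_{t_{\max}}),\sigma(\alpha_{t_{\min}})]\subset(0,1)$. Truncating the integral to this window (the discarded tails being small because the integrand is bounded, with $t_{\min},t_{\max}$ playing the role of tolerance parameters) and taking the left-endpoint Riemann sum in $u$ — the $i$-th subinterval $[\sigma(\alpha_{t_i}),\sigma(\alpha_{t_{i-1}})]$ has width $\sigma(\alpha_{t_{i-1}})-\sigma(\alpha_{t_i})$ and evaluation point $u=\sigma(\alpha_{t_i})$, i.e.\ log-SNR $\alpha_{t_i}$ — multiplies the integrand coefficient $1/\big(\sigma(\alpha_{t_i})\sigma(-\alpha_{t_i})\big)$ by that width to produce exactly $w_{t_i}$. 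Finally, the inner expectation at each node is replaced by its $N$-sample Monte Carlo estimator $\tilde{\boldsymbol\epsilon}_\phi(\boldsymbol a_{t_i},\alpha_{t_i})=\tfrac1N\sum_{j=1}^N\|\boldsymbol\epsilon^j-\boldsymbol\epsilon_\phi(\boldsymbol a_{t_i}^j,\alpha_{t_i})\|_2^2$ with $\boldsymbol\epsilon^j\sim\mathcal N(\boldsymbol 0,\boldsymbol I)$ and $\boldsymbol a_{t_i}^j=\sqrt{\sigma(\alpha_{t_i})}\,\boldsymbol a_0+\sqrt{\sigma(-\alpha_{t_i})}\,\boldsymbol\epsilon^j$, which is unbiased since this is precisely the conditional law of $\boldsymbol a_{t_i}$ given $\boldsymbol a_0$ in Equation~\ref{transition distribution}. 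Collecting terms yields the claimed expression.

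The step I expect to be the main obstacle is the analytic fact underpinning the first two displays: that $d\,\sigma(\alpha_t)-\mathbb E_{\boldsymbol\epsilon}[\|\boldsymbol\epsilon-\boldsymbol\epsilon_\phi(\boldsymbol a_t,\alpha_t)\|_2^2]$ is integrable over $\mathbb R$ — equivalently, that after the substitution the integrand remains bounded as $u\to0^+$ and $u\to1^-$ — and hence that the truncation error vanishes as $t_{\min}\to0$, $t_{\max}\to1$. This rests on the limiting behaviour of a well-trained noise network (at high noise $\boldsymbol\epsilon_\phi\to\boldsymbol\epsilon$ at the correct rate, with the matching low-noise behaviour), i.e.\ exactly the divergence-cancellation phenomenon of \cite{konginformation}; if one is content with the stated approximation rather than a quantitative error bound, this can be imported from that reference and everything else is change-of-variables bookkeeping plus a routine unbiasedness check.
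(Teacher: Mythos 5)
Your proposal is correct and follows essentially the same route as the paper's derivation in the appendix: merge the two divergent pieces into a single convergent integrand, change variables from $\alpha_t$ to $\sigma(\alpha_t)$ (picking up the Jacobian $\sigma(\alpha_t)\sigma(-\alpha_t)$), truncate to $[\sigma(\alpha_{t_{\text{max}}}),\sigma(\alpha_{t_{\text{min}}})]$, take a left-endpoint Riemann sum over the induced partition to obtain the weights $w_{t_i}$, and replace the inner expectation by an $N$-sample Monte Carlo estimate. Your added remarks on integrability and the divergence-cancellation asymptotics only make explicit what the paper imports from the cited reference, so there is no substantive difference.
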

 The detailed derivation is provided in Appendix \ref{appendix::numerial integration}.

\subsection{MaxEnt RL with Diffusion Policy}
\label{MaxEnrRLDP}
After addressing the critical challenges in training and probability estimation for the diffusion policy, we present the complete algorithm for achieving the MaxEnt RL objective with a diffusion policy. Our approach is based on the soft actor-critic framework. We utilize two neural networks: $Q_{\theta}(s,a)$ to model the Q-function, and $\boldsymbol{\epsilon}_{\phi}(\boldsymbol{a}_t, \alpha_t, \boldsymbol{s})$ to model the noise prediction network for the diffusion policy $\pi_{\phi}(\boldsymbol{a}_0|\boldsymbol{s})$.

The training process alternates between policy evaluation and policy improvement. In the policy evaluation step, the Q-network is trained by minimizing the soft Bellman error, as defined in Equation \ref{soft Bellman error}. Here, the actions $a' \sim \pi_{\phi}(\cdot|\boldsymbol{s'})$ are sampled by solving the probability flow ODE in Equation \ref{PFODE} with the noise prediction network $\boldsymbol{\epsilon}_{\phi}(\boldsymbol{a}_t, \alpha_t, \boldsymbol{s})$, and the log probality $\log \pi_{\phi}(\cdot|\boldsymbol{s})$ is approximated using Equation \ref{equation::numeracal integral}. In the policy improvement step, the noise prediction network is optimized using the loss function in Equation \ref{equation::policy training loss}\footnote{The minimizers of Equation \ref{equation::policy training loss} and \ref{equation::SAC policy loss} will be equal when the exponential of the Q-function can be exactly expressed by the chosen policy set, so the capacity of the noise prediction network is preferred to be large if allowed.}, with the training target computed in Equation \ref{equation::noise target}. The pseudocode for our method is presented in Algorithm \ref{algorithm::MaxEntDP}.

\begin{algorithm}[tb]
\caption{MaxEnt RL with Diffusion Policy}
\label{algorithm::MaxEntDP}
\begin{algorithmic}[1]
\STATE Initialize critic networks $Q_{\theta_1}$, $Q_{\theta_2}$, and the noise prediction network $\boldsymbol{\epsilon}_{\phi}$ with random parameters $\theta_1, \theta_2, \phi$.
\STATE Initialize target networks $\theta'_1 \leftarrow \theta_1, \theta'_2 \leftarrow \theta_2$
\STATE Initialize replay buffer $\mathcal{D}$
\FOR{each iteration}
    \FOR{each sampling step}
    \STATE Sample $\boldsymbol{a} \sim \pi_{\phi}(\cdot|\boldsymbol{s})$ according to Equation \ref{PFODE}
    \STATE Step environment: $\boldsymbol{s'}, r$  $\leftarrow$ $\text{env}(\boldsymbol{a})$
    \STATE Store $(\boldsymbol{s}, \boldsymbol{a}, r, \boldsymbol{s}')$ in $\mathcal{D}$
    \ENDFOR
    \FOR{each update step}
    \STATE Sample $B$ transitions $(\boldsymbol{s}, \boldsymbol{a}, r, \boldsymbol{s}')$ from $\mathcal{D}$
    \STATE Sample $\boldsymbol{a}' \sim \pi_{\phi}(\cdot|\boldsymbol{s}')$ according to Equation \ref{PFODE}
    \STATE Compute $\log\pi_{\phi}(\boldsymbol{a}'|\boldsymbol{s}')$ using Equation \ref{equation::numeracal integral}
    \STATE Compute the target Q-value: $
        \hat{Q}(\boldsymbol{s},\boldsymbol{a}) = r(\boldsymbol{s}, \boldsymbol{a}) + \gamma  \left( \min_{i=1,2}Q_{\theta_i}(\boldsymbol{s}',\boldsymbol{a}') - \beta \log \pi_{\phi}(\boldsymbol{a}'|\boldsymbol{s}') \right). 
    $
    \STATE Update critics: $\theta_{i} = \arg\min_{\theta_i}\frac{1}{B}\sum (Q_{\theta_i}(\boldsymbol{s}, \boldsymbol{a}) - \hat{Q}(\boldsymbol{s}, \boldsymbol{a}))^2$
    \STATE Sample $t \sim \mathcal{U}([t_{\text{min}}, t_{\text{max}}])$ and the noisy action $\boldsymbol{a}_t \sim \mathcal{N}(\boldsymbol{a}_t| \sqrt{\sigma(\alpha_t)}\boldsymbol{a}, \sigma(-\alpha_t)\boldsymbol{I})$
    \STATE Estimate $\boldsymbol{\epsilon}^{*}(\boldsymbol{a}_t, \alpha_t, \boldsymbol{s})$ with Equation \ref{equation::noise target} 
    \STATE Update the noise prediction network: $\phi=\arg \min_{\phi}\frac{1}{B}\sum \parallel \boldsymbol{\epsilon}_{\phi}(\boldsymbol{a}_t, \alpha_t, \boldsymbol{s}) - \boldsymbol{\epsilon}^{*}(\boldsymbol{a}_t, \alpha_t, \boldsymbol{s})\parallel^2_2$
    \STATE Updtae target networks: $\theta_{i} \leftarrow \tau \theta_{i} + (1 - \tau)\theta_{i}$
    \ENDFOR
\ENDFOR
\vskip -0.2in
\end{algorithmic}
\end{algorithm}

In addition, we adopt several techniques to improve the training and inference of our method:

\textbf{Truncated Gaussian Noise Distribution for Bounded Action Space.} In RL tasks with bounded action spaces, the Q-function is undefined outside the action space. To avoid evaluating Q-values for illegal actions, the noise distribution in Equation \ref{equation::noise target} is modified from a standard Gaussian to a truncated standard Gaussian. This modification still generates samples according to the Gaussian function, but all samples are bounded in the specified range.

\textbf{Action Selection for Inference.} Previous studies \cite{chao2024maximum,wangdiffusion,maodiffusion,chenaligning} have found that a deterministic policy typically outperforms its stochastic counterpart during testing. Consequently, we employ an action selection technique to further refine the policy after training. Specifically, $M$ action candidates are sampled from the diffusion policy, and the action with the highest Q-value is selected to interact with the RL environment.

\begin{figure*}[t]
\begin{center}
\centerline{\includegraphics[width=1.9\columnwidth]{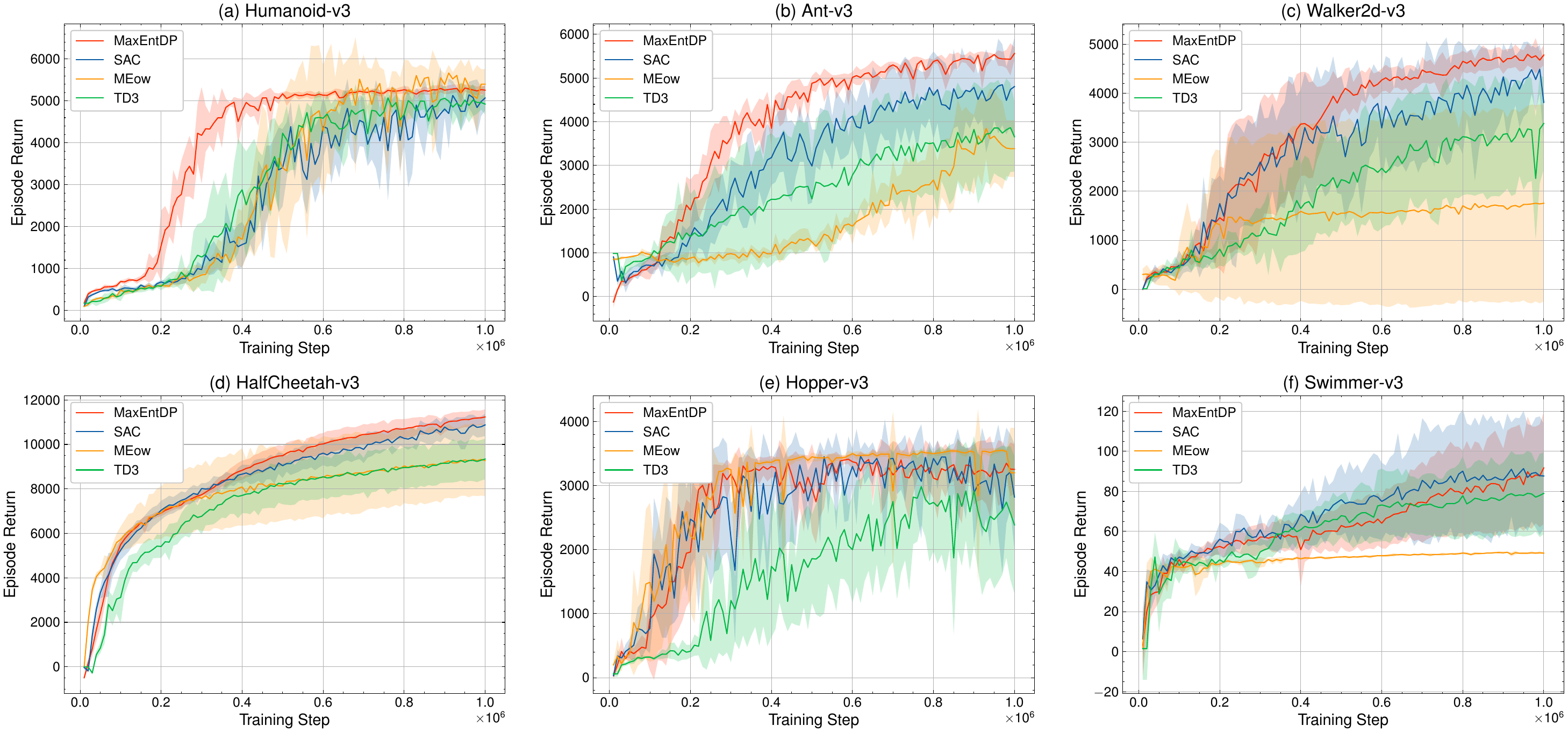}}
\caption{Learning curves on Mujoco benchmarks. The solid lines are the means and the shaded regions represent the standard deviations over five runs.}
\label{fig::cmp_generative}
\end{center}
\vskip -0.3in
\end{figure*}

\section{Related Work}

\textbf{MaxEnt RL}
A variety of approaches have been proposed to achieve the MaxEnt RL objective. SQL \cite{haarnoja2017reinforcement} introduces soft Q-learning to learn the optimal soft Q-function and trains an energy-based model using the amortized Stein variational gradient method to generate actions according to the exponential of the optimal soft Q-function. SAC \cite{haarnoja2018soft} presents the soft actor-critic algorithm, which iteratively improves the policy towards a higher soft value, and provides an implementation using Gaussian policies. To improve the sample efficiency of SAC, CrossQ \cite{bhattcrossq} and BRO \cite{naumanbigger} construct larger critic networks and apply a suite of regularization techniques to stabilize training. MEow \cite{chao2024maximum} employs energy-based normalizing flows as unified policies to represent both the actor and the critic, simplifying the training process for MaxEnt RL. This paper highlights the importance of policy representation within the MaxEnt RL framework: a more expressive policy representation enhances exploration and facilitates closer convergence to the optimal MaxEnt policy. Diffusion models, which are more expressive than Gaussian distributions and energy-based normalizing flows and easier to train and sample than energy-based models, present an ideal policy representation that effectively balances expressiveness and the complexity of training and inference.


\textbf{Diffusion Policies for Offline RL.} Offline RL attempts to learn a well-performing policy from a pre-collected dataset. Collected by multiple policies, the offline datasets may exhibit high skewness and multi-modality. Diffusion Policy \cite{chi2023diffusion} trains a diffusion model to approximate the multi-modal expert behavior by behavior cloning. To optimize the policy for higher performance, Diffusion-QL \cite{wangdiffusion} combines the diffusion loss with Q-value loss evaluated on the generated actions, CEP \cite{lu2023contrastive} trains a separate guidance network using Q-function to guide the actions to regions with high Q values, and EDA \cite{chenaligning} employs direct preference optimization to align the diffusion policy with Q-function. To improve the training and inference speed of diffusion policy, EDP \cite{kang2024efficient} adopts action approximation and efficient ODE sampler DPM-solver for action generation, and CPQL \cite{chen2024boosting} utilizes the consistency policy \cite{song2023consistency}, a one-step diffusion policy. Due to the lack of online samples, the above approaches require staying close to the behavior policy to prevent out-of-distribution actions whose performances are unpredictable. However, in this paper, we focus on online RL, where online interactions are accessible to correct the errors in value evaluation. Therefore, different techniques should be developed to employ diffusion models in online RL.

\textbf{Diffusion Policies for Online RL.} In online RL, a key challenge lies in balancing exploration and exploitation. Previous studies \cite{psenkalearning,yang2023policy,ding2024diffusion,wang2024diffusion} apply expressive diffusion models as policy representations to promote the exploration of the state-action space. QSM \cite{psenkalearning} fits the exponential of the Q-function by training a score network to approximate the action gradient of the Q-function. DIPO \cite{yang2023policy} improves the actions by applying the action gradient of the Q-function and clones the improved actions. QVPO \cite{ding2024diffusion} weights the diffusion loss with the Q-value, assigning probabilities to actions that are linearly proportional to the Q-value. DACER \cite{wang2024diffusion} optimizes the Q-value loss to generate actions with high Q values and adds extra noise to the generated actions to keep a constant policy entropy. Unlike previous approaches, we employ the MaxEnt RL objective to encourage exploration and enhance policy robustness. Similar to QSM, we train the diffusion model to fit the exponential of the Q-function. However, our Q-weighted noise estimation method is more accurate and stable. Furthermore, we include policy entropy when computing the Q-function, which can further promote exploration.

\section{Experiments}
In this section, we conduct experiments to address the following questions:
(1) Can MaxEntDP effectively learn a multi-modal policy in a multi-goal task? 
(2) Does the diffusion policy outperform the Gaussian policy and other generative models within the MaxEnt RL framework? 
(3) How does performance vary when replacing the Q-weighted Noise Estimation method with competing approaches, such as QSM and iDEM? 
(4) How does MaxEntDP compare to other diffusion-based online RL algorithms? (5) Does the MaxEnt RL objective benefit policy training? 

\subsection{A Toy Multi-goal Environment}
In this subsection, we use a 2-D multi-goal environment \cite{haarnoja2017reinforcement} to demonstrate the effectiveness of MaxEntDP. In this environment, the agent is a 2-D point mass trying to reach one of four symmetrically placed goals. The state and action are position and velocity, respectively. And the reward is a penalty for the velocity and distance from the closest goal. Under the MaxEnt RL objective, the optimal policy is to choose one goal randomly and then move toward it. Figure \ref{fig::toy} shows the trajectories generated by the diffusion policy during the training process. We can see that MaxEntDP can effectively explore the state-action space and learn a multi-modal policy that approaches the optimal MaxEnt policy.

\begin{figure}[t]
\begin{center}
\centerline{\includegraphics[width=\columnwidth]{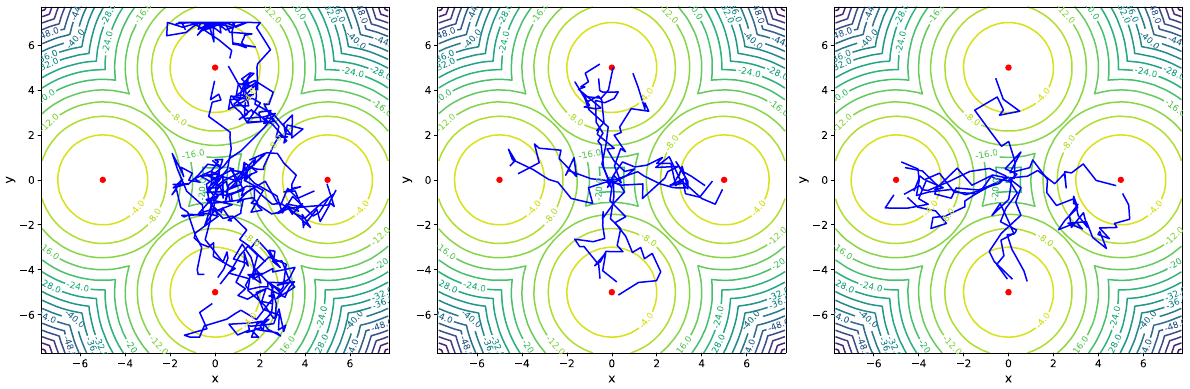}}
\caption{The generated trajectories during the training process. From left to right are trajectories generated after 2k, 4k, and 6k training steps. The goals are denoted by the red points.}
\label{fig::toy}
\end{center}
\vskip -0.4in
\end{figure}

\subsection{Comparative Evaluation }
\label{section::cmp_eval}
\textbf{Policy Representations.} To reveal the superiority of applying diffusion models as policy representations to achieve the MaxEnt RL objective, we compare the performance of MaxEntDP on Mujoco benchmarks \cite{todorov2012mujoco} with other algorithms. Our chosen baseline algorithms include SAC \cite{haarnoja2018soft}, MEow \cite{chao2024maximum}, and TD3 \cite{fujimoto2018addressing}. SAC and MEow are two methods to pursue the same MaxRnt RL objective using Gaussian policy and energy-based normalizing flow policy, and TD3 provides a contrast to the deterministic policy. Figure \ref{fig::cmp_generative} shows that MaxEntDP surpasses (a-d) or matches (e-f) baseline algorithms on all tasks, and its evaluation variance is much smaller than other algorithms. This result indicates that the combination of MaxEntRL and diffusion policy effectively balances exploration and exploitation, enabling rapid convergence to a robust and high-performing policy.

\textbf{Diffusion Models Training Methods.}
In this subsection, we demonstrate the advantages of the proposed Q-weighted Noise Estimation method (QNE) on training diffusion models, compared to two competing methods, QSM and iDEM. We replace the QNE module with QSM and iDEM to observe performance differences. As shown in Figure \ref{fig::ablation_QNE}(a), the performance of QSM and iDEM improves initially but then fluctuates after reaching a high level. This may be due to both methods relying on the gradient computation of the Q-function to estimate the score function. When the Q-value is large, its gradient typically varies much across different actions, leading to a high variance in score function estimation for QSM and iDEM, as illustrated in Figure \ref{fig::ablation_QNE}(b). This increased variance causes instability in the training of the noise prediction network. In contrast, QNE exhibits significantly lower variance, and its performance improves steadily throughout the training process.

\begin{figure}[t]
\begin{center}
\centerline{\includegraphics[width=\columnwidth]{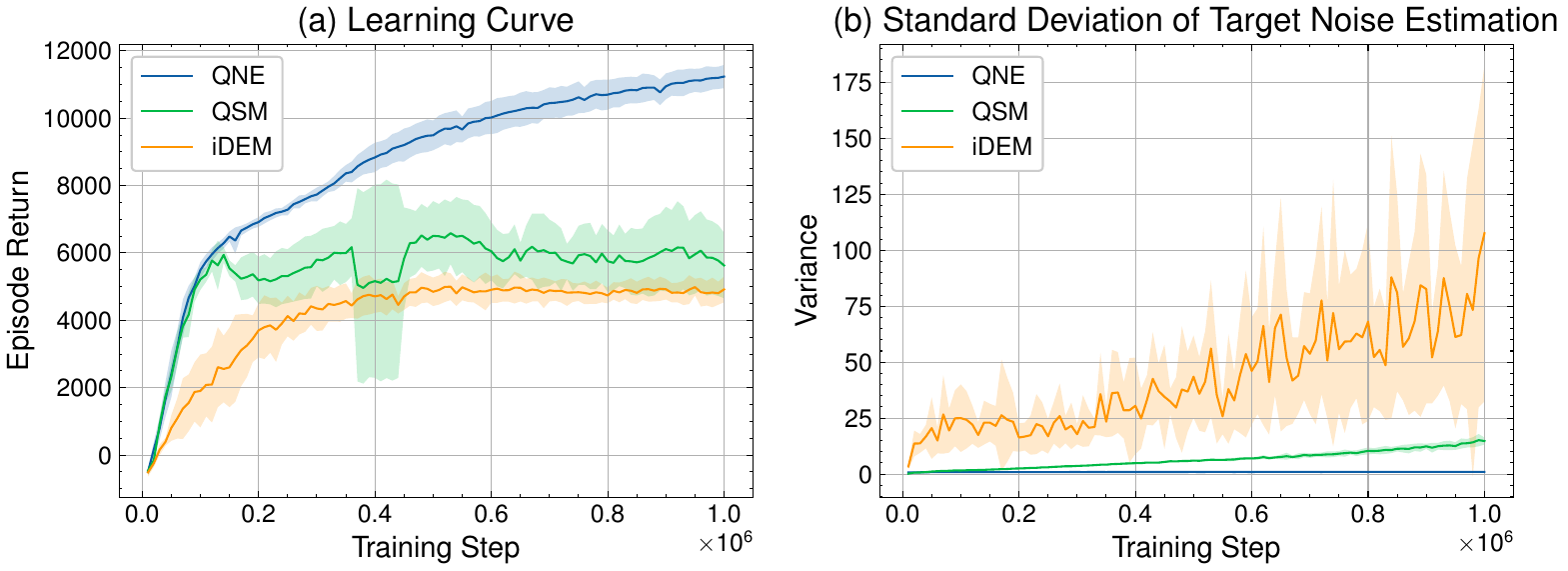}}
\caption{Comparison between QNE and two competing methods, QSM and iDEM on HalfCheetah-v3 benchmark. (a) Learning curves. (b) Standard deviations of target noise (a.k.a. the scaled score function) estimation computed on a batch of noisy actions.}
\label{fig::ablation_QNE}
\end{center}
\vskip -0.4in
\end{figure}

\textbf{Diffusion-based Online RL Algorithms.}
We also compare MaxEntDP with state-of-the-art diffusion-based online RL algorithms: QSM, DIPO, QVPO, and DACER. These algorithms adopt different techniques to seek a balance between exploration and exploitation. Since the performances of different exploration strategies depend quite a lot on the characteristics of the RL tasks, none of the competing methods performs consistently well on all tasks, as shown by Figure \ref{fig::comparison_diffusion}. In contrast, our MaxEntDP outperforms or performs comparably to the top method on each task, showing consistent sample efficiency and stability.

\begin{figure}[t]
\begin{center}
\centerline{\includegraphics[width=\columnwidth]{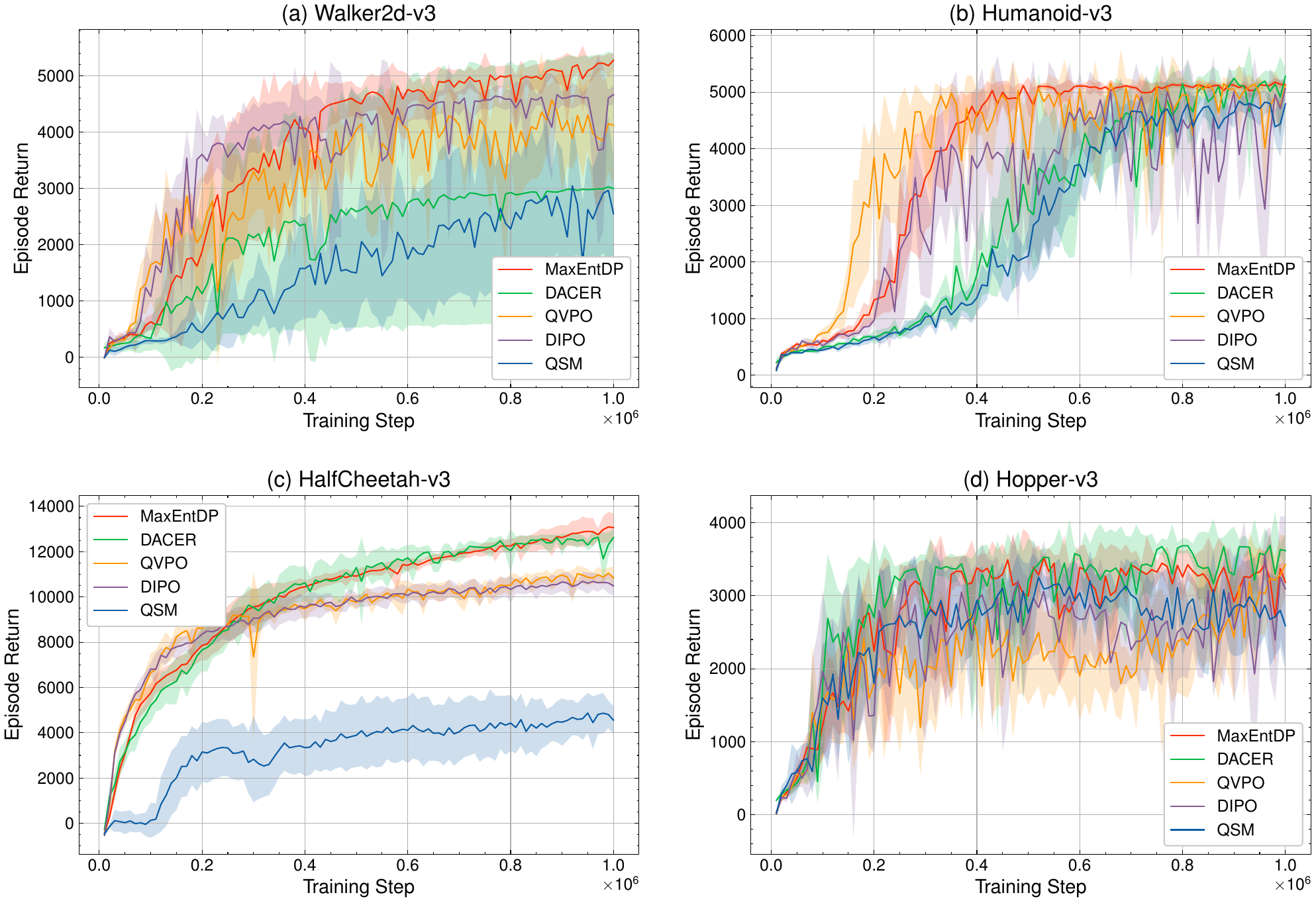}}
\caption{Learning curves of diffusion-based online RL algorithms.}
\label{fig::comparison_diffusion}
\end{center}
\vskip -0.4in
\end{figure}

\subsection{Ablation Analysis}
In addition, we analyze the function of the MaxEnt RL objective by removing the probability approximation module in MaxEntDP. After doing this, we compute the original Q-function rather than the soft Q-function in the policy evaluation step. As shown in Figure \ref{fig::ablation_noent}, the performance decreases and exhibits greater variance after excluding policy entropy in the Q-function. This implies that the MaxEnt RL objective can benefit policy learning: it not only encourages the action distribution at the current step to be more stochastic (by fitting the exponential of Q-function), but also encourages transferring to the states with higher entropy (by computing the soft Q-function). Therefore, the MaxEnt RL objective shows a stronger exploration ability of the whole state-action space, leading to an efficient and stable training process.

\begin{figure}[ht]
\begin{center}
\centerline{\includegraphics[width=\columnwidth]{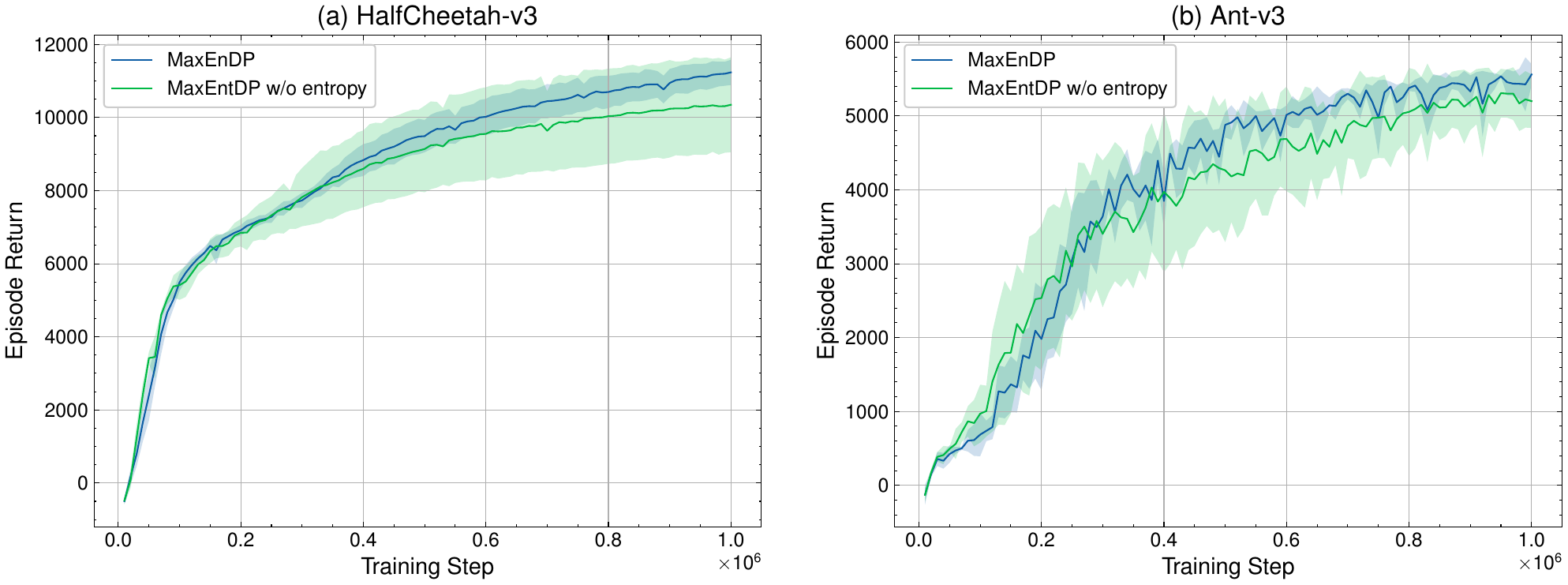}}
\caption{The learning curve of MaxEntDP with and without entropy in Q-function computation.}
\label{fig::ablation_noent}
\end{center}
\vskip -0.3in
\end{figure}

\section{Conclusion}
This paper proposes MaxEntDP, a method to achieve the MaxEnt RL objective with diffusion policies. Compared to the Gaussian policy, the diffusion policy shows stronger exploration ability and expressiveness to approach the optimal MaxEnt policy. To address challenges in applying diffusion policies, we propose Q-weighted noise estimation to train the diffusion model and introduce the numerical integration technique to approximate the probability of diffusion policy. Experiments on Mujoco benchmarks demonstrate that MaxEntDP outperforms Gaussian policy and other generative models within the MaxEnt RL framework, and performs comparably to other diffusion-based online RL algorithms.

\textbf{Limitations and Future Work.} Since different RL tasks require varying levels of exploration, we adjust the temperature coefficient for each task and keep it fixed during training. Future work will explore how to automatically adapt this parameter, making MaxEntDP easier to apply in real-world applications.

\section*{Acknowledgements}
We extend our gratitude to the ICML reviewers who evaluated MaxEntDP for their valuable insights and feedback. This work was supported by the National Key Research and Development Program of China (No. 2021ZD0201504).



\section*{Impact Statement}

This paper focuses on achieving the MaxEnt RL objective, which is particularly effective for reinforcement learning tasks that require extensive exploration or policy robustness. Beyond advancing RL, our proposed Q-weighted noise estimation and numerical integration techniques address two fundamental issues in diffusion models: fitting the exponential of a given energy function and computing exact
probabilities. These two modules can be seamlessly integrated into diffusion-based studies that involve these issues.


\bibliography{example_paper}
\bibliographystyle{icml2025}

\newpage
\appendix
\onecolumn
\section{Theoretic Proofs.}

\subsection{Proofs for Soft Actor-Critic Algorithm}
\label{appendix::SAC}

Our proof is based on the tabular setting, i.e., $|\mathcal{S}| < \infty$, $|\mathcal{A}| < \infty$ and the replay buffer $\mathcal{D}$ covers all $(\boldsymbol{s},\boldsymbol{a})\in |\mathcal{S}| \times |\mathcal{A}|$.

The soft Q-function of policy $\pi$ is defined as:
\begin{equation}
    Q^{\pi}(\boldsymbol{s}_t,\boldsymbol{a}_t)=r(\boldsymbol{s}_t,\boldsymbol{a}_t)+\mathbb{E}_{\rho_{\pi}} \left[\sum_{l=1}^{\infty}\gamma^l(r(\boldsymbol{s}_{t+l}, \boldsymbol{a}_{t+l})-\beta\log \pi(\boldsymbol{a}_{t+l}|\boldsymbol{s}_{t+l}))\right],
\end{equation}
which satisfies:
\begin{align}
    \nonumber
    Q^{\pi}(\boldsymbol{s}_t,\boldsymbol{a}_t)&=r(\boldsymbol{s}_t,\boldsymbol{a}_t)+\mathbb{E}_{\rho_{\pi}} \left[\sum_{l=1}^{\infty}\gamma^l(r(\boldsymbol{s}_{t+l}, \boldsymbol{a}_{t+l})-\beta\log \pi(\boldsymbol{a}_{t+l}|\boldsymbol{s}_{t+l}))\right]\\
    &=r(\boldsymbol{s}_t,\boldsymbol{a}_t)+\mathbb{E}_{\rho_{\pi}} \left[-\gamma \beta \log \pi(\boldsymbol{a}_{t+1}|\boldsymbol{s}_{t+1}) +\gamma r(\boldsymbol{s}_{t+1}, \boldsymbol{a}_{t+1}) + \sum_{l=2}^{\infty}\gamma^l(r(\boldsymbol{s}_{t+l}, \boldsymbol{a}_{t+l})-\beta \log \pi(\boldsymbol{a}_{t+l}|\boldsymbol{s}_{t+l}))\right]\\
    &=r(\boldsymbol{s}_t,\boldsymbol{a}_t)+\gamma\mathbb{E}_{\rho_{\pi}} \left[-\beta \log \pi(\boldsymbol{a}_{t+1}|\boldsymbol{s}_{t+1}) + r(\boldsymbol{s}_{t+1}, \boldsymbol{a}_{t+1})+\sum_{l=1}^{\infty}\gamma^l(r(\boldsymbol{s}_{t+1+l}, \boldsymbol{a}_{t+1+l})-\beta \log \pi(\boldsymbol{a}_{t+1+l}|\boldsymbol{s}_{t+1+l}))\right]\\
    \label{equation::soft_bellman_equation}
    &=r(\boldsymbol{s}_t,\boldsymbol{a}_t) + \gamma\mathbb{E}_{\boldsymbol{s}_{t+1} \sim p, \boldsymbol{a}_{t+1} \sim \pi}\left[-\beta\log \pi(\boldsymbol{a}_{t+1}|\boldsymbol{s}_{t+1}) +Q^{\pi}(\boldsymbol{s}_{t+1},\boldsymbol{a}_{t+1})\right].
\end{align}

Equation \ref{equation::soft_bellman_equation} is called the soft Bellman equation. 

\begin{lemma}
\label{lemma::soft_policy_evaluation}
    \textbf{(Soft Policy Evaluation)} \textit{$Q_{\theta}$ converges to the soft Q-function of $\pi_{\phi}$ as $L(\theta) \to 0$.}
\end{lemma}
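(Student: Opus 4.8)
The plan is to show that the soft Bellman operator $\mathcal{T}^{\pi}$, defined by
\[
(\mathcal{T}^{\pi}Q)(\boldsymbol{s}_t,\boldsymbol{a}_t) = r(\boldsymbol{s}_t,\boldsymbol{a}_t) + \gamma\,\mathbb{E}_{\boldsymbol{s}_{t+1}\sim p,\ \boldsymbol{a}_{t+1}\sim\pi_{\phi}}\!\left[Q(\boldsymbol{s}_{t+1},\boldsymbol{a}_{t+1}) - \beta\log\pi_{\phi}(\boldsymbol{a}_{t+1}|\boldsymbol{s}_{t+1})\right],
\]
is a $\gamma$-contraction in the sup-norm on the finite space $\mathcal{S}\times\mathcal{A}$, and that its unique fixed point is exactly the soft $Q$-function $Q^{\pi_{\phi}}$ defined above. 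Note that $\hat Q(\boldsymbol{s},\boldsymbol{a})$ in Equation \ref{soft Bellman error} is precisely $(\mathcal{T}^{\pi}Q_{\theta})(\boldsymbol{s},\boldsymbol{a})$, so $L(\theta)=0$ forces $Q_{\theta} = \mathcal{T}^{\pi}Q_{\theta}$ pointwise (using that $\mathcal{D}$ has full support over $\mathcal{S}\times\mathcal{A}$), i.e. $Q_{\theta}$ is a fixed point of $\mathcal{T}^{\pi}$.

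First I would verify that $Q^{\pi_{\phi}}$ is a fixed point: this is exactly the soft Bellman equation \eqref{equation::soft_bellman_equation} already derived in this appendix, so that step is essentially free. Second, I would establish the contraction property: for any two bounded functions $Q_1,Q_2$,
\[
\bigl|(\mathcal{T}^{\pi}Q_1)(\boldsymbol{s},\boldsymbol{a}) - (\mathcal{T}^{\pi}Q_2)(\boldsymbol{s},\boldsymbol{a})\bigr| = \gamma\left|\mathbb{E}_{\boldsymbol{s}'\sim p,\ \boldsymbol{a}'\sim\pi_{\phi}}\!\left[Q_1(\boldsymbol{s}',\boldsymbol{a}') - Q_2(\boldsymbol{s}',\boldsymbol{a}')\right]\right| \le \gamma\,\lVert Q_1 - Q_2\rVert_{\infty},
\]
since the entropy terms $-\beta\log\pi_{\phi}$ cancel and the expectation of a quantity bounded by $\lVert Q_1-Q_2\rVert_{\infty}$ is itself so bounded; taking the supremum over $(\boldsymbol{s},\boldsymbol{a})$ gives $\lVert\mathcal{T}^{\pi}Q_1 - \mathcal{T}^{\pi}Q_2\rVert_{\infty}\le\gamma\lVert Q_1 - Q_2\rVert_{\infty}$. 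Third, I would invoke the Banach fixed-point theorem on the complete metric space of bounded functions on the finite set $\mathcal{S}\times\mathcal{A}$ (boundedness of $Q^{\pi_\phi}$ follows from $r$ being bounded and, implicitly, $\pi_\phi$ being bounded away from $0$ so the entropy term stays finite; with $\gamma<1$ the geometric series converges) to conclude the fixed point is unique, hence $Q_{\theta}=Q^{\pi_{\phi}}$. Strictly, to handle $L(\theta)\to 0$ rather than $L(\theta)=0$, I would note that $L(\theta)$ is a weighted sum over the finitely many $(\boldsymbol{s},\boldsymbol{a})$ of squared residuals with strictly positive weights, so $L(\theta)\to 0$ implies each residual $Q_{\theta}(\boldsymbol{s},\boldsymbol{a}) - \hat Q(\boldsymbol{s},\boldsymbol{a})\to 0$, i.e. $Q_{\theta}\to\mathcal{T}^{\pi}Q_{\theta}$; combined with the contraction, a standard estimate $\lVert Q_{\theta} - Q^{\pi_\phi}\rVert_\infty \le \frac{1}{1-\gamma}\lVert Q_{\theta} - \mathcal{T}^{\pi}Q_{\theta}\rVert_\infty$ gives $Q_{\theta}\to Q^{\pi_{\phi}}$.

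The only mild subtlety — the main obstacle, such as it is — is the bookkeeping connecting the loss $L(\theta)$ to pointwise equality: one needs the replay buffer $\mathcal{D}$ to assign positive probability to every $(\boldsymbol{s},\boldsymbol{a})$ (which is exactly the stated tabular assumption "$\mathcal{D}$ covers all $(\boldsymbol{s},\boldsymbol{a})$"), and one should be slightly careful that $\hat Q$ is computed with the \emph{same} network $Q_\theta$ (no separate frozen target) so that a zero of $L$ genuinely certifies a fixed point. Everything else is the textbook contraction-mapping argument, and no genuinely hard step is anticipated.
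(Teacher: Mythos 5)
Your proposal is correct and follows essentially the same route as the paper's proof: define the soft Bellman operator, show it is a $\gamma$-contraction in the sup-norm with unique fixed point $Q^{\pi_\phi}$ (the soft Bellman equation), and note that a zero of $L(\theta)$ under the full-support replay buffer forces $Q_\theta$ to be that fixed point. Your additional step handling $L(\theta)\to 0$ via the bound $\lVert Q_\theta - Q^{\pi_\phi}\rVert_\infty \le \frac{1}{1-\gamma}\lVert Q_\theta - \mathcal{T}^{\pi}Q_\theta\rVert_\infty$ is a small but welcome refinement that the paper glosses over by arguing only the exact case $L(\theta)=0$.
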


\begin{proof}
    Define the soft Bellman operator $\mathcal{T}^{\pi}$ as:
\begin{equation}
    \mathcal{T}^{\pi}Q(s,a)=r(\boldsymbol{s}, \boldsymbol{a}) + \gamma \mathbb{E}_{\boldsymbol{s}' \sim p, \boldsymbol{a}' \sim \pi} \left[ Q(\boldsymbol{s}',\boldsymbol{a}') - \beta \log \pi(\boldsymbol{a}'|\boldsymbol{s}') \right].
\end{equation}

For two Q-function $Q$ and $Q'$, we have
\begin{align}
    \left|\mathcal{T}^{\pi}Q(\boldsymbol{s},\boldsymbol{a})-\mathcal{T}^{\pi}Q'(\boldsymbol{s},\boldsymbol{a})\right|
    &=\left|\gamma\mathbb{E}_{\boldsymbol{s}' \sim p, \boldsymbol{a}' \sim \pi} \left[Q(\boldsymbol{s}',\boldsymbol{a}')-Q'(\boldsymbol{s}',\boldsymbol{a}')\right]\right|\\
    &\leq \gamma \mathbb{E}_{\boldsymbol{s}' \sim p, \boldsymbol{a}' \sim \pi} \left[\left| Q(\boldsymbol{s}',\boldsymbol{a}')-Q'(\boldsymbol{s}',\boldsymbol{a}')\right |\right]\\
    &\leq \gamma \max_{(\boldsymbol{s}',\boldsymbol{a}')}\left|Q(\boldsymbol{s}',\boldsymbol{a}')-Q'(\boldsymbol{s}',\boldsymbol{a}')\right|\\
    &=\gamma \parallel Q-Q' \parallel_{\infty}
\end{align}
Then
\begin{align}
    \parallel\mathcal{T}^{\pi}Q-\mathcal{T}^{\pi}Q'\parallel_{\infty} 
    &\leq\gamma \parallel Q-Q' \parallel_{\infty},
\end{align}
which proves that the soft Bellman operator $\mathcal{T}^{\pi}$ is a contraction. It has a unique fixed point $Q^{\pi}$. When Q-function loss $L(\theta)=0$, the Q-function $Q_{\theta}$ satisfies the soft Bellman equation $Q_{\theta}(\boldsymbol{s},\boldsymbol{a})=r(\boldsymbol{s}, \boldsymbol{a}) + \gamma \mathbb{E}_{\boldsymbol{s}' \sim p, \boldsymbol{a}' \sim \pi_{\phi}} \left[ Q_{\theta}(\boldsymbol{s}',\boldsymbol{a}') - \beta \log \pi_{\phi}(\boldsymbol{a}'|\boldsymbol{s}') \right]$ for all $(\boldsymbol{s},\boldsymbol{a})\in |\mathcal{S}| \times |\mathcal{A}|$, indicating that $Q_{\theta}$ converges to the true soft function of $\pi_{\phi}$.
\end{proof}

\begin{lemma}
\label{lemma::soft_policy_improvement}
    \textbf{(Soft Policy Improvement)} \textit{Let \( \pi_{\phi_k} \in \Pi\) and assume $Q_{\theta}=Q^{\pi_{\phi_k}}$ after soft policy evaluation. If $\pi_{\phi_{k+1}}$ is the minimizer of the loss defined in Equation \ref{equation::SAC policy loss}, then $Q^{ \pi_{\phi_{k+1}}}(\boldsymbol{s}, \boldsymbol{a}) \geq Q^{ \pi_{\phi_k}}(\boldsymbol{s}, \boldsymbol{a})$ for all $(\boldsymbol{s}, \boldsymbol{a}) \in \mathcal{S} \times \mathcal{A}$ with $|\mathcal{A}| < \infty$. }
\end{lemma}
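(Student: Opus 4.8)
The plan is to follow the standard soft policy improvement argument from the original SAC paper, adapted to the present notation. First I would unpack what it means for $\pi_{\phi_{k+1}}$ to minimize the KL objective in Equation \ref{equation::SAC policy loss}. Since the partition function $Z_{\theta}(\boldsymbol{s})$ does not depend on the policy, minimizing $\text{D}_{\text{KL}}\left(\pi_{\phi}(\cdot|\boldsymbol{s}) \,\middle\|\, \exp(\tfrac{1}{\beta}Q_{\theta}(\boldsymbol{s},\cdot))/Z_{\theta}(\boldsymbol{s})\right)$ is equivalent to minimizing $\mathbb{E}_{\boldsymbol{a}\sim\pi_{\phi}}\!\left[\beta\log\pi_{\phi}(\boldsymbol{a}|\boldsymbol{s}) - Q_{\theta}(\boldsymbol{s},\boldsymbol{a})\right]$. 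Because $\pi_{\phi_k}\in\Pi$ is always a feasible choice, the minimizer $\pi_{\phi_{k+1}}$ must do at least as well, giving for every $\boldsymbol{s}$ the key inequality
\begin{equation}
\mathbb{E}_{\boldsymbol{a}\sim\pi_{\phi_{k+1}}}\!\left[\beta\log\pi_{\phi_{k+1}}(\boldsymbol{a}|\boldsymbol{s}) - Q^{\pi_{\phi_k}}(\boldsymbol{s},\boldsymbol{a})\right] \le \mathbb{E}_{\boldsymbol{a}\sim\pi_{\phi_k}}\!\left[\beta\log\pi_{\phi_k}(\boldsymbol{a}|\boldsymbol{s}) - Q^{\pi_{\phi_k}}(\boldsymbol{s},\boldsymbol{a})\right],
\end{equation}
where I have used $Q_\theta = Q^{\pi_{\phi_k}}$ from the soft policy evaluation assumption. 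Defining the soft value $V^{\pi_{\phi_k}}(\boldsymbol{s}) = \mathbb{E}_{\boldsymbol{a}\sim\pi_{\phi_k}}[Q^{\pi_{\phi_k}}(\boldsymbol{s},\boldsymbol{a}) - \beta\log\pi_{\phi_k}(\boldsymbol{a}|\boldsymbol{s})]$, this rearranges to $\mathbb{E}_{\boldsymbol{a}\sim\pi_{\phi_{k+1}}}[Q^{\pi_{\phi_k}}(\boldsymbol{s},\boldsymbol{a}) - \beta\log\pi_{\phi_{k+1}}(\boldsymbol{a}|\boldsymbol{s})] \ge V^{\pi_{\phi_k}}(\boldsymbol{s})$.

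Next I would plug this bound into the soft Bellman equation \eqref{equation::soft_bellman_equation} and iterate. Starting from $Q^{\pi_{\phi_k}}(\boldsymbol{s}_t,\boldsymbol{a}_t) = r(\boldsymbol{s}_t,\boldsymbol{a}_t) + \gamma\,\mathbb{E}_{\boldsymbol{s}_{t+1}}\!\left[V^{\pi_{\phi_k}}(\boldsymbol{s}_{t+1})\right]$, I replace $V^{\pi_{\phi_k}}(\boldsymbol{s}_{t+1})$ by the smaller quantity $\mathbb{E}_{\boldsymbol{a}_{t+1}\sim\pi_{\phi_{k+1}}}[Q^{\pi_{\phi_k}}(\boldsymbol{s}_{t+1},\boldsymbol{a}_{t+1}) - \beta\log\pi_{\phi_{k+1}}(\boldsymbol{a}_{t+1}|\boldsymbol{s}_{t+1})]$, obtaining
\begin{equation}
Q^{\pi_{\phi_k}}(\boldsymbol{s}_t,\boldsymbol{a}_t) \le r(\boldsymbol{s}_t,\boldsymbol{a}_t) + \gamma\,\mathbb{E}_{\boldsymbol{s}_{t+1},\,\boldsymbol{a}_{t+1}\sim\pi_{\phi_{k+1}}}\!\left[Q^{\pi_{\phi_k}}(\boldsymbol{s}_{t+1},\boldsymbol{a}_{t+1}) - \beta\log\pi_{\phi_{k+1}}(\boldsymbol{a}_{t+1}|\boldsymbol{s}_{t+1})\right].
\end{equation}
Now I expand the right-hand $Q^{\pi_{\phi_k}}$ using the same Bellman identity and repeat; each substitution peels off one more reward-plus-entropy term evaluated under $\pi_{\phi_{k+1}}$. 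Since rewards are bounded and $0\le\gamma<1$ (so the tail $\gamma^n$ term vanishes and, in the finite tabular setting, the entropy terms stay bounded), taking $n\to\infty$ yields exactly the series defining $Q^{\pi_{\phi_{k+1}}}(\boldsymbol{s}_t,\boldsymbol{a}_t)$, so $Q^{\pi_{\phi_k}}(\boldsymbol{s}_t,\boldsymbol{a}_t) \le Q^{\pi_{\phi_{k+1}}}(\boldsymbol{s}_t,\boldsymbol{a}_t)$ for all $(\boldsymbol{s}_t,\boldsymbol{a}_t)$, which is the claim.

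The step I expect to require the most care is the convergence of the telescoping/iterated-substitution argument: I must justify that the repeated application of the bound converges to $Q^{\pi_{\phi_{k+1}}}$ rather than some other limit. The clean way is to phrase it via monotonicity of the soft Bellman operator $\mathcal{T}^{\pi_{\phi_{k+1}}}$: the inequality above says $Q^{\pi_{\phi_k}} \le \mathcal{T}^{\pi_{\phi_{k+1}}} Q^{\pi_{\phi_k}}$, and since $\mathcal{T}^{\pi_{\phi_{k+1}}}$ is monotone and a $\gamma$-contraction (Lemma \ref{lemma::soft_policy_evaluation}) with fixed point $Q^{\pi_{\phi_{k+1}}}$, repeated application gives $Q^{\pi_{\phi_k}} \le \mathcal{T}^{\pi_{\phi_{k+1}}} Q^{\pi_{\phi_k}} \le (\mathcal{T}^{\pi_{\phi_{k+1}}})^2 Q^{\pi_{\phi_k}} \le \cdots \to Q^{\pi_{\phi_{k+1}}}$. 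I would also note at the outset that the minimizer $\pi_{\phi_{k+1}}$ of \eqref{equation::SAC policy loss} exists because in the tabular setting with sufficiently expressive $\Pi$ the unconstrained minimizer $\exp(\tfrac1\beta Q_\theta(\boldsymbol{s},\cdot))/Z_\theta(\boldsymbol{s})$ lies in $\Pi$, so the KL is driven to zero — though the argument only needs that $\pi_{\phi_{k+1}}$ does no worse than $\pi_{\phi_k}$ on the objective, which holds by feasibility regardless.
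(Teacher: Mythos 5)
Your proposal is correct and follows essentially the same route as the paper's proof: drop the policy-independent partition function to turn the KL minimization into the expected $\log\pi - \tfrac{1}{\beta}Q^{\pi_{\phi_k}}$ objective, use feasibility of $\pi_{\phi_k}\in\Pi$ to obtain the key inequality, and then repeatedly expand $Q^{\pi_{\phi_k}}$ via the soft Bellman equation to bound it by $Q^{\pi_{\phi_{k+1}}}$. Your monotone-contraction framing of the iterated substitution ($Q^{\pi_{\phi_k}} \le \mathcal{T}^{\pi_{\phi_{k+1}}} Q^{\pi_{\phi_k}} \le \cdots \to Q^{\pi_{\phi_{k+1}}}$) is a slightly more explicit justification of the limit step that the paper leaves implicit, but it is the same argument in substance.
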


\begin{proof}
Since the new policy $\pi_{\pi_{k+1}}$ is the minimizer of the loss defined in Equation \ref{equation::SAC policy loss}, it holds that 
\begin{align}
    \pi_{\phi_{k+1}}(\cdot|\boldsymbol{s}) 
    & = \arg \min_{\pi \in \Pi} \text{D}_{\text{KL}} \left( \pi(\cdot | \boldsymbol{s}) \ \middle\| \ \frac{\exp(\frac{1}{\beta}Q_{\theta}(\boldsymbol{s}, \cdot))}{Z_{\theta}(\boldsymbol{s})} \right)\\
    &= \arg \min_{\pi \in \Pi} \text{D}_{\text{KL}} \left( \pi(\cdot | \boldsymbol{s}) \ \middle\| \ \frac{\exp(\frac{1}{\beta}Q^{\pi_{\phi_k}}(\boldsymbol{s}, \cdot))}{Z^{\pi_{\phi_k}}(\boldsymbol{s})} \right)\\
    &=\arg \min_{\pi \in \Pi} \left\{\mathbb{E}_{\boldsymbol{a}\sim \pi(\cdot|\boldsymbol{s})}\left[\log \pi(\boldsymbol{a}|\boldsymbol{s}) - \frac{1}{\beta}Q^{\pi_{\phi_k}}(\boldsymbol{s}, \boldsymbol{a})+\log Z^{\pi_{\phi_k}}(\boldsymbol{s})\right]\right\}\\
    &=\arg \min_{\pi \in \Pi} \left\{\mathbb{E}_{\boldsymbol{a}\sim \pi(\cdot|\boldsymbol{s})}\left[\log \pi(\boldsymbol{a}|\boldsymbol{s}) - \frac{1}{\beta} Q^{\pi_{\phi_k}}(\boldsymbol{s}, \boldsymbol{a})\right] + \log Z^{\pi_{\phi_k}}(\boldsymbol{s})\right\}\\
    \label{equation::eq_new_old}
    &=\arg \min_{\pi \in \Pi} \left\{\mathbb{E}_{\boldsymbol{a}\sim \pi(\cdot|\boldsymbol{s})}\left[\log \pi(\boldsymbol{a}|\boldsymbol{s}) - \frac{1}{\beta} Q^{\pi_{\phi_k}}(\boldsymbol{s}, \boldsymbol{a})\right] \right\}.
\end{align}

Since \( \pi_{\phi_k} \in \Pi\), we have

\begin{equation}
\label{equation::new_old}
    \mathbb{E}_{\boldsymbol{a}\sim \pi_{\phi_{k+1}}(\cdot|\boldsymbol{s})}\left[\log \pi_{\phi_{k+1}}(\boldsymbol{a}|\boldsymbol{s}) - \frac{1}{\beta} Q^{\pi_{\phi_k}}(\boldsymbol{s}, \boldsymbol{a})\right] \leq \mathbb{E}_{\boldsymbol{a}\sim \pi_{\phi_{k}}(\cdot|\boldsymbol{s})}\left[\log \pi_{\phi_{k}}(\boldsymbol{a}|\boldsymbol{s}) - \frac{1}{\beta} Q^{\pi_{\phi_k}}(\boldsymbol{s}, \boldsymbol{a})\right].
\end{equation}

According the soft Bellman equation, the Q-function of $\pi_{\phi_k}$ satisfies 

\begin{align}
    Q^{\pi_{\phi_k}}(\boldsymbol{s}_{t},\boldsymbol{a}_{t})
    &=r(\boldsymbol{s}_{t},\boldsymbol{a}_{t}) + \gamma\mathbb{E}_{\boldsymbol{s}_{t+1} \sim p,\boldsymbol{a}_{t+1} \sim\pi_{\phi_k}}\left[Q^{\pi_{\phi_k}}(\boldsymbol{s}_{t+1},\boldsymbol{a}_{t+1}) -\beta\log \pi_{\phi_k}(\boldsymbol{a}_{t+1}|\boldsymbol{s}_{t+1})\right]\\
    &\leq r(\boldsymbol{s}_{t},\boldsymbol{a}_{t}) + \gamma\mathbb{E}_{\boldsymbol{s}_{t+1} \sim p,\boldsymbol{a}_{t+1} \sim\pi_{\phi_{k+1}}}\left[Q^{\pi_{\phi_k}}(\boldsymbol{s}_{t+1},\boldsymbol{a}_{t+1}) -\beta\log \pi_{\phi_{k+1}}(\boldsymbol{a}_{t+1}|\boldsymbol{s}_{t+1})\right]\\
    &= r(\boldsymbol{s}_{t},\boldsymbol{a}_{t})+ \gamma\mathbb{E}_{\boldsymbol{s}_{t+1} \sim p,\boldsymbol{a}_{t+1} \sim\pi_{\phi_{k+1}}} \left[r(\boldsymbol{s}_{t+1},\boldsymbol{a}_{t+1}) -\beta\log \pi_{\phi_{k+1}}(\boldsymbol{a}_{t+1}|\boldsymbol{s}_{t+1})\right]
    \\& \hspace{0.68in}+\gamma^2\mathbb{E}_{\boldsymbol{s}_{t+1} \sim p,\boldsymbol{a}_{t+1} \sim\pi_{\phi_{k+1}}, \boldsymbol{s}_{t+2} \sim p,\boldsymbol{a}_{t+2} \sim\pi_{\phi_k}}\left[Q^{\pi_{\phi_k}}(\boldsymbol{s}_{t+2},\boldsymbol{a}_{t+2}) -\beta\log \pi_{\phi_k}(\boldsymbol{a}_{t+2}|\boldsymbol{s}_{t+2})\right]\\
    &\vdots \\
    \label{equation::Q_new_old}
    &\leq Q^{\pi_{\phi_{k+1}}}(\boldsymbol{s}_{t},\boldsymbol{a}_{t}),
\end{align}
which is proved by repeatedly expanding $Q^{\pi_{\phi_{k}}}$ using the soft Bellman equation and applying Equation \ref{equation::new_old}. Then the proof for Lemma \ref{lemma::soft_policy_improvement} is completed.
\end{proof}

\begin{theorem}
    \textbf{(Soft Policy Iteration)} In the tabular setting, let $L(\theta_k)=0$ and $L(\phi_k)$ be minimized for each $k$. Repeated application of policy evaluation and policy improvement, i.e., $k \to \infty$, $\pi_{\phi_k}$ will converge to a policy $\pi^*$ such that $Q^{ \pi^{*}}(\boldsymbol{s}, \boldsymbol{a}) \geq Q^{ \pi}(\boldsymbol{s}, \boldsymbol{a})$ for all $\pi \in \Pi$ and $(\boldsymbol{s}, \boldsymbol{a}) \in \mathcal{S} \times \mathcal{A}$ with $|\mathcal{A}| < \infty$.
\end{theorem}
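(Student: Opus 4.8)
The plan is to combine the two preceding lemmas with a monotone convergence argument. First I would note that since $L(\theta_k)=0$, Lemma~\ref{lemma::soft_policy_evaluation} gives $Q_{\theta_k}=Q^{\pi_{\phi_k}}$ after the policy evaluation step, so the hypothesis of Lemma~\ref{lemma::soft_policy_improvement} is met at every $k$; that lemma then yields $Q^{\pi_{\phi_{k+1}}}(\boldsymbol{s},\boldsymbol{a}) \geq Q^{\pi_{\phi_k}}(\boldsymbol{s},\boldsymbol{a})$ for all $(\boldsymbol{s},\boldsymbol{a})$. Hence the sequence $\{Q^{\pi_{\phi_k}}\}_k$ of soft Q-functions is pointwise non-decreasing.

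Next I would establish an upper bound. Because $r$ is bounded by $r_{\max}$ and $|\mathcal{A}|<\infty$, the per-step entropy obeys $0 \leq \mathcal{H}(\pi(\cdot|\boldsymbol{s})) \leq \log|\mathcal{A}|$, so from the definition of the soft Q-function we get $Q^{\pi}(\boldsymbol{s},\boldsymbol{a}) \leq r_{\max} + \tfrac{\gamma}{1-\gamma}\bigl(r_{\max}+\beta\log|\mathcal{A}|\bigr)$ uniformly over $\pi$ and $(\boldsymbol{s},\boldsymbol{a})$. A non-decreasing sequence bounded above converges, so $Q^{\pi_{\phi_k}} \to Q^{\infty}$ pointwise (a finite-dimensional limit in the tabular setting).

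Then I would identify a limiting policy and transfer the optimality characterization. Each $\pi_{\phi_k}(\cdot|\boldsymbol{s})$ lies in the product of probability simplices over $\mathcal{A}$, which is compact, so some subsequence $\pi_{\phi_{k_j}}$ converges to a policy $\pi^*$; if $\Pi$ is closed then $\pi^*\in\Pi$ (I would record this mild topological requirement, and note that full convergence of the whole sequence follows when the optimal policy in $\Pi$ is unique, otherwise one simply works with the already-convergent $Q^{\infty}$). By continuity of $\pi\mapsto Q^{\pi}$ in the tabular setting, $Q^{\pi^*}=Q^{\infty}$. Since each $\pi_{\phi_{k+1}}(\cdot|\boldsymbol{s})$ minimizes $\mathbb{E}_{\boldsymbol{a}\sim\pi}\bigl[\log\pi(\boldsymbol{a}|\boldsymbol{s})-\tfrac{1}{\beta}Q^{\pi_{\phi_k}}(\boldsymbol{s},\boldsymbol{a})\bigr]$ over $\pi\in\Pi$ (Equation~\ref{equation::eq_new_old}) and $Q^{\pi_{\phi_k}}\to Q^{\pi^*}$, passing to the limit gives that $\pi^*$ minimizes $\mathbb{E}_{\boldsymbol{a}\sim\pi}\bigl[\log\pi(\boldsymbol{a}|\boldsymbol{s})-\tfrac{1}{\beta}Q^{\pi^*}(\boldsymbol{s},\boldsymbol{a})\bigr]$ over $\pi\in\Pi$.

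Finally I would re-run the telescoping argument of Lemma~\ref{lemma::soft_policy_improvement} with $\pi^*$ playing the role of $\pi_{\phi_k}$ and an arbitrary $\pi\in\Pi$ playing the role of $\pi_{\phi_{k+1}}$: the minimality of $\pi^*$ yields the analogue of Equation~\ref{equation::new_old}, and repeatedly expanding $Q^{\pi^*}$ via the soft Bellman equation~\ref{equation::soft_bellman_equation} and substituting that inequality produces $Q^{\pi^*}(\boldsymbol{s},\boldsymbol{a}) \geq Q^{\pi}(\boldsymbol{s},\boldsymbol{a})$ for all $\pi\in\Pi$ and all $(\boldsymbol{s},\boldsymbol{a})$, which is the claim. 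I expect the main obstacle to be the limit step rather than the monotonicity or boundedness: justifying that the limiting policy still satisfies the KL-minimality condition needs the compactness of the tabular policy space, continuity of $\pi\mapsto Q^{\pi}$, and closedness of $\Pi$, so I would make these precise instead of glossing over them.
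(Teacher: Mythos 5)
Your proposal follows essentially the same route as the paper's own proof: Lemmas~\ref{lemma::soft_policy_evaluation} and~\ref{lemma::soft_policy_improvement} give a monotonically non-decreasing sequence of soft Q-functions, boundedness of reward and entropy gives convergence, and the limiting policy's minimality in Equation~\ref{equation::eq_new_old} is fed back through the telescoping soft-Bellman expansion of Equation~\ref{equation::Q_new_old} to conclude $Q^{\pi^*} \geq Q^{\pi}$ for all $\pi \in \Pi$. The only difference is that you make explicit the compactness, closedness-of-$\Pi$, continuity, and subsequence arguments needed to pass from convergence of $Q^{\pi_{\phi_k}}$ to existence of a limiting policy satisfying the minimality condition, a step the paper asserts without detail; this is a refinement of the same argument, not a different one, and your version is correct.
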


\begin{proof}
According to Lemma \ref{lemma::soft_policy_evaluation} and \ref{lemma::soft_policy_improvement}, when $L(\theta_k)=0$ and $L(\phi_k)$ be minimized for each $k$, we have $\forall k, Q^{ \pi_{\phi_{k+1}}} \geq Q^{ \pi_{\phi_k}}$. This indicates that the sequence $Q^{ \pi_{\phi_k}}$ is monotonically increasing. Furthermore, the Q-function is bounded since both the reward and entropy are bound. Therefore, when $k \to \infty$, the policy sequence converges to some $\pi^*$. Below we will prove that $\pi^*$ is the optimal MaxEnt policy within $\pi$.

Since $\pi^*$ has already converged, it satisfies 
\begin{equation}
    \pi^*(\cdot|\boldsymbol{s})=\arg \min_{\pi \in \Pi} \left\{\mathbb{E}_{\boldsymbol{a}\sim \pi(\cdot|\boldsymbol{s})}\left[\log \pi(\boldsymbol{a}|\boldsymbol{s}) - \frac{1}{\beta}Q^{\pi^*}(\boldsymbol{s}, \boldsymbol{a})\right] \right\}
\end{equation}
following Equation \ref{equation::eq_new_old}. Then for all $\pi \in \Pi$, it holds that
\begin{equation}
    \mathbb{E}_{\boldsymbol{a}\sim \pi^*(\cdot|\boldsymbol{s})}\left[\log \pi^*(\boldsymbol{a}|\boldsymbol{s}) - \frac{1}{\beta} Q^{\pi^*}(\boldsymbol{s}, \boldsymbol{a})\right] \leq \mathbb{E}_{\boldsymbol{a}\sim \pi(\cdot|\boldsymbol{s})}\left[\log \pi(\boldsymbol{a}|\boldsymbol{s}) - \frac{1}{\beta} Q^{\pi^*}(\boldsymbol{s}, \boldsymbol{a})\right].
\end{equation}
Using the same iterative argument as in the proof of Equation \ref{equation::Q_new_old}, we can derive $Q^{ \pi^{*}}(\boldsymbol{s}, \boldsymbol{a}) \geq Q^{ \pi}(\boldsymbol{s}, \boldsymbol{a})$ for all $(\boldsymbol{s}, \boldsymbol{a}) \in \mathcal{S} \times \mathcal{A}$. Consequently, $\pi^*$ is the optimal MaxEnt policy within $\Pi$. The proof is completed.
\end{proof}

\subsection{The Decomposition of the Condition Distribution $p(\boldsymbol{a}_0|\boldsymbol{a}_t)$}
\label{appendix::condition distribution}

According to the Bayesian rule, the conditional distribution $p(\boldsymbol{a}_0|\boldsymbol{a}_t)$ satisfies:
\begin{align}
    p(\boldsymbol{a}_0|\boldsymbol{a}_t) 
    &= \frac{p(\boldsymbol{a}_0) p(\boldsymbol{a}_t|\boldsymbol{a_0})}{p(\boldsymbol{x}_t)} \\
    & \propto p(\boldsymbol{a}_0) p(\boldsymbol{a}_t|\boldsymbol{a_0})\\
    \label{joint probability decomposition}
    & \propto \exp(\frac{1}{\beta}Q(\boldsymbol{a}_0)) \mathcal{N}(\boldsymbol{a}_t | \sqrt{\sigma(\alpha_t)} \boldsymbol{a}_0, \sigma(-\alpha_t) \boldsymbol{I}),
\end{align}
where Equation \ref{joint probability decomposition} is derived by substituting Equation \ref{target distribution} and \ref{transition distribution}. For the same $\boldsymbol{a}_0$ and $\boldsymbol{a}_t$, the probability density of $\boldsymbol{a}_t$ following the Gaussian distribution $\mathcal{N}(\boldsymbol{a}_t | \sqrt{\sigma(\alpha_t)} \boldsymbol{a}_0, \sigma(-\alpha_t) \boldsymbol{I})$ is equal to the probability density of $\boldsymbol{a}_0$ following the Gaussian distribution $\mathcal{N}(\boldsymbol{a}_0 | \frac{1}{\sqrt{\sigma(\alpha_t)}} \boldsymbol{a}_t, \frac{\sigma(-\alpha_t)}{\sigma(\alpha_t)} \boldsymbol{I})$ up to a constant to compensate for the scale difference between the two random variables. Then we have
 \begin{equation}
 \label{eqution::condition distribution}
     p(\boldsymbol{a}_0|\boldsymbol{a}_t) \propto \exp(\frac{1}{\beta}Q(\boldsymbol{a}_0)) \mathcal{N}(\boldsymbol{a}_0 | \frac{1}{\sqrt{\sigma(\alpha_t)}} \boldsymbol{a}_t, \frac{\sigma(-\alpha_t)}{\sigma(\alpha_t)} \boldsymbol{I}).
\end{equation}

\subsection{Estimating the Score Function with Importance Sampling}
\label{appendix::importance sampling}
The score function satisfies
\begin{align}
     \nabla_{\boldsymbol{a}_t} \log p(\boldsymbol{a}_t) 
     &= \mathbb{E}_{p(\boldsymbol{a}_0|\boldsymbol{a}_t)}\left[\nabla_{\boldsymbol{a}_t} \log p(\boldsymbol{a}_t|\boldsymbol{a}_0)\right]\\
    &=\mathbb{E}_{\boldsymbol{a}_0 \sim \mathcal{N}(\boldsymbol{a}_0 | \frac{1}{\sqrt{\sigma(\alpha_t)}} \boldsymbol{a}_t, \frac{\sigma(-\alpha_t)}{\sigma(\alpha_t)} \boldsymbol{I})}\left[\frac{p(\boldsymbol{a}_0|\boldsymbol{a}_t)}{\mathcal{N}(\boldsymbol{a}_0 | \frac{1}{\sqrt{\sigma(\alpha_t)}} \boldsymbol{a}_t, \frac{\sigma(-\alpha_t)}{\sigma(\alpha_t)} \boldsymbol{I})}\nabla_{\boldsymbol{a}_t} \log p(\boldsymbol{a}_t|\boldsymbol{a}_0)\right] \\
    \label{equation::for_iDEM_start}
    &=\mathbb{E}_{\boldsymbol{a}_0 \sim \mathcal{N}(\boldsymbol{a}_0 | \frac{1}{\sqrt{\sigma(\alpha_t)}} \boldsymbol{a}_t, \frac{\sigma(-\alpha_t)}{\sigma(\alpha_t)} \boldsymbol{I})}\left[w(\boldsymbol{a}_0)\nabla_{\boldsymbol{a}_t} \log p(\boldsymbol{a}_t|\boldsymbol{a}_0)\right]\\
    \label{equation::expectation_a0}
    &=\mathbb{E}_{\boldsymbol{a}_0 \sim \mathcal{N}(\boldsymbol{a}_0 | \frac{1}{\sqrt{\sigma(\alpha_t)}} \boldsymbol{a}_t, \frac{\sigma(-\alpha_t)}{\sigma(\alpha_t)} \boldsymbol{I})}\left[-w(\boldsymbol{a}_0)\frac{\boldsymbol{a}_t-\sqrt{\sigma(\alpha_t)}\boldsymbol{a}_0}{\sigma(-\alpha_t)}\right],
\end{align}
where the importance ratio $w(\boldsymbol{a}_0)=\frac{\exp(\frac{1}{\beta}Q(\boldsymbol{a}_0))}{Z(\boldsymbol{a}_t)}$ with $Z(\boldsymbol{a}_t)=\int \exp(\frac{1}{\beta}Q(\boldsymbol{a}_0)) \mathcal{N}(\boldsymbol{a}_0 | \frac{1}{\sqrt{\sigma(\alpha_t)}} \boldsymbol{a}_t, \frac{\sigma(-\alpha_t)}{\sigma(\alpha_t)} \boldsymbol{I}) \text{d}\boldsymbol{a}_0$ being the normalizing constant of $p(\boldsymbol{a}_0|\boldsymbol{a}_t)$. Let $\boldsymbol{a}_0 = \frac{1}{\sqrt{\sigma(\alpha_t)}} \boldsymbol{a}_t + \frac{\sqrt{\sigma(-\alpha_t)}}{\sqrt{\sigma(\alpha_t)}}\boldsymbol{\epsilon}$, then Equation \ref{equation::expectation_a0} can be rewritten as
\begin{align}
    \nabla_{\boldsymbol{a}_t} \log p(\boldsymbol{a}_t) 
     &= \frac{1}{\sqrt{\sigma(-\alpha_t)}} \cdot \mathbb{E}_{\boldsymbol{\epsilon} \sim \mathcal{N}(\boldsymbol{0}, \boldsymbol{I})}\left[w(\boldsymbol{a}_0)\boldsymbol{\epsilon}\right]\\
     &\approx \frac{1}{\sqrt{\sigma(-\alpha_t)}} \cdot\frac{1}{K} \sum_{i=1}^K w(\boldsymbol{a}_0^i) \boldsymbol{\epsilon}^i,
\end{align}
where $\boldsymbol{\epsilon}^1, \dots, \boldsymbol{\epsilon}^K \sim \mathcal{N}(\boldsymbol{0}, \boldsymbol{I})$, $\boldsymbol{a}_0^i=\frac{1}{\sqrt{\sigma(\alpha_t)}} \boldsymbol{a}_t + \frac{\sqrt{\sigma(-\alpha_t)}}{\sqrt{\sigma(\alpha_t)}}\boldsymbol{\epsilon}^i$.

\subsection{The Derivation of the iDEM Method}
\label{appendix::iDEM}
We provide the derivation of the iDEM method to demonstrate the difference and relationship between Q-weighted noise estimation and iDEM. Our derivation is equivalent to the official proof of iDEM, although in a different way. 

Since $\nabla_{\boldsymbol{a}_0}\log \mathcal{N}(\boldsymbol{a}_0 | \frac{1}{\sqrt{\sigma(\alpha_t)}} \boldsymbol{a}_t, \frac{\sigma(-\alpha_t)}{\sigma(\alpha_t)} \boldsymbol{I})=\sqrt{\sigma(\alpha_t)}\cdot\frac{\boldsymbol{a}_t-\sqrt{\sigma(\alpha_t)}\boldsymbol{a}_0}{\sigma(-\alpha_t)}$ and $\nabla_{\boldsymbol{a}_t} \log p(\boldsymbol{a}_t|\boldsymbol{a}_0)=-\frac{\boldsymbol{a}_t-\sqrt{\sigma(\alpha_t)}\boldsymbol{a}_0}{\sigma(-\alpha_t)}$, it holds that

\begin{equation}
\label{equation::gradiant_relation}
    \nabla_{\boldsymbol{a}_t} \log p(\boldsymbol{a}_t|\boldsymbol{a}_0)=-\frac{1} {\sqrt{\sigma(\alpha_t)}} \nabla_{\boldsymbol{a}_0}\log \mathcal{N}(\boldsymbol{a}_0 | \frac{1}{\sqrt{\sigma(\alpha_t)}} \boldsymbol{a}_t, \frac{\sigma(-\alpha_t)}{\sigma(\alpha_t)} \boldsymbol{I}).
\end{equation}

Substitute Equation \ref{equation::gradiant_relation} into Equation \ref{equation::for_iDEM_start}, we have 
\begin{align}
    \nabla_{\boldsymbol{a}_t} \log p(\boldsymbol{a}_t) 
    &= -\frac{1} {\sqrt{\sigma(\alpha_t)}} \mathbb{E}_{\boldsymbol{a}_0 \sim \mathcal{N}(\boldsymbol{a}_0 | \frac{1}{\sqrt{\sigma(\alpha_t)}} \boldsymbol{a}_t, \frac{\sigma(-\alpha_t)}{\sigma(\alpha_t)} \boldsymbol{I})}\left[w(\boldsymbol{a}_0)\nabla_{\boldsymbol{a}_0}\log \mathcal{N}(\boldsymbol{a}_0 | \frac{1}{\sqrt{\sigma(\alpha_t)}} \boldsymbol{a}_t, \frac{\sigma(-\alpha_t)}{\sigma(\alpha_t)} \boldsymbol{I})\right]\\
    \label{equation::crosspoint}
    &=-\frac{1} {\sqrt{\sigma(\alpha_t)}}\int w(\boldsymbol{a}_0)\nabla_{\boldsymbol{a}_0} \mathcal{N}(\boldsymbol{a}_0 | \frac{1}{\sqrt{\sigma(\alpha_t)}} \boldsymbol{a}_t, \frac{\sigma(-\alpha_t)}{\sigma(\alpha_t)} \boldsymbol{I}) \text{d} \boldsymbol{a}_0.
\end{align}
After applying the integration by parts formula, Equation \ref{equation::crosspoint} can be expanded to
\begin{align}
    \nabla_{\boldsymbol{a}_t} \log p(\boldsymbol{a}_t) 
    &=\frac{1} {\sqrt{\sigma(\alpha_t)}}\int \left(\nabla_{\boldsymbol{a}_0} w(\boldsymbol{a}_0)\right)\cdot \mathcal{N}(\boldsymbol{a}_0 | \frac{1}{\sqrt{\sigma(\alpha_t)}} \boldsymbol{a}_t, \frac{\sigma(-\alpha_t)}{\sigma(\alpha_t)} \boldsymbol{I}) \text{d} \boldsymbol{a}_0.
\end{align}
Since $w(\boldsymbol{a}_0)=\frac{\exp(\frac{1}{\beta}Q(\boldsymbol{a}_0))}{Z(\boldsymbol{a}_t)}$, it satisfies that $\nabla_{\boldsymbol{a}_0} w(\boldsymbol{a}_0)=w(\boldsymbol{a}_0) \nabla_{\boldsymbol{a}_0}\frac{1}{\beta}Q(\boldsymbol{a}_0)$. Then we have
\begin{align}
    \nabla_{\boldsymbol{a}_t} \log p(\boldsymbol{a}_t)
    &=\frac{1} {\sqrt{\sigma(\alpha_t)}}\int w(\boldsymbol{a}_0)\nabla_{\boldsymbol{a}_0}\frac{1}{\beta}Q(\boldsymbol{a}_0)\cdot \mathcal{N}(\boldsymbol{a}_0 | \frac{1}{\sqrt{\sigma(\alpha_t)}} \boldsymbol{a}_t, \frac{\sigma(-\alpha_t)}{\sigma(\alpha_t)} \boldsymbol{I}) \text{d} \boldsymbol{a}_0\\
    \label{equation::weighted_gradient}
    &=\frac{1} {\sqrt{\sigma(\alpha_t)}} \mathbb{E}_{\boldsymbol{a}_0 \sim \mathcal{N}(\boldsymbol{a}_0 | \frac{1}{\sqrt{\sigma(\alpha_t)}} \boldsymbol{a}_t, \frac{\sigma(-\alpha_t)}{\sigma(\alpha_t)} \boldsymbol{I})}\left[w(\boldsymbol{a}_0)\nabla_{\boldsymbol{a}_0}\frac{1}{\beta}Q(\boldsymbol{a}_0)\right]
\end{align}
Equation \ref{equation::weighted_gradient} appears similar to Equation \ref{equation::expectation_a0}, except that the random variable in the expectation transfers from Q-weighted noise to Q-weighted gradient. Utilizing the same weighted importance sampling method as Q-weighted noise, the score function can be estimated by
\begin{align}
    \nabla_{\boldsymbol{a}_t} \log p(\boldsymbol{a}_t)
    &\approx \frac{1}{\sqrt{\sigma(\alpha_t)}} \cdot \sum_{i=1}^K \frac{w(\boldsymbol{a}_0^i)}{\sum_{j=1}^K w(\boldsymbol{a}_0^j)} \nabla_{\boldsymbol{a}_0^i} \frac{1}{\beta} Q(\boldsymbol{a}_0^i)\\
    &= \frac{1}{\sqrt{\sigma(\alpha_t)}} \sum_{i=1}^K \text{softmax}(\frac{1}{\beta}Q(\boldsymbol{a}_0^{1:K}))_i \nabla_{\boldsymbol{a}_0^i} \frac{1}{\beta} Q(\boldsymbol{a}_0^i).
\end{align}

\subsection{Probabity Approximation of Diffusion Policy Using Numerical Integration Techniques}
\label{appendix::numerial integration}
We use numerical integration techniques to estimate the following integral:
\begin{equation}
    \log p_{\phi}(\boldsymbol{a}_0)= c - \frac{1}{2} \int_{-\infty}^{+\infty}\mathbb{E}_{\boldsymbol{\epsilon} }\left[\parallel \boldsymbol{\epsilon} - \boldsymbol{\epsilon}_{\phi}(\boldsymbol{a}_t, \alpha_t) \parallel^2_2 \right]\text{d}\alpha_t,
\end{equation}
where $c=-\frac{d}{2}\log (2\pi e)+\frac{d}{2} \int_{-\infty}^{+\infty} \sigma(\alpha_t) \text{d}\alpha_t$ with $d$ being the dimension of $\boldsymbol{a}_0$, $\boldsymbol{\epsilon} \sim \mathcal{N}(\boldsymbol{0}, \boldsymbol{I})$, $\boldsymbol{a}_t = \sqrt{\sigma(\alpha_t)} \boldsymbol{a}_0 + \sqrt{\sigma(-\alpha_t)} \boldsymbol{\epsilon}$. 
First, using the equation $\alpha_t=\log \frac{\sigma(\alpha_t)}{1-\sigma(\alpha_t)}$, we change the integral variable from $\alpha_t$ to $\sigma(\alpha_t)$ as $\sigma(\alpha_t)$ has a narrow integration domain of $(0,1)$:
\begin{equation}
    \log p_{\phi}(\boldsymbol{a}_0)= -\frac{d}{2}\log (2\pi e) + \frac{1}{2} \int_{0}^{1} \left(d \cdot \sigma(\alpha_t) - \mathbb{E}_{\boldsymbol{\epsilon} }\left[\parallel \boldsymbol{\epsilon} - \boldsymbol{\epsilon}_{\phi}(\boldsymbol{a}_t, \alpha_t) \parallel^2_2 \right]\right)\frac{\text{d} \sigma(\alpha_t)}{\sigma(\alpha_t)\sigma(-\alpha_t)}.
\end{equation}
In practice, we calculate the integral between $[\sigma(\alpha_{t_{\text{max}}}), \sigma(\alpha_{t_{\text{min}}})]$ for numerical stability, where in our experiments, $t_{\text{min}}=1e-3$ and $t_{\text{max}}=0.9946$. Obtain $T+1$ discrete timesteps by setting $t_{i} = t_{\text{min}} + \frac{i}{T} (t_{\text{max}} - t_{\text{min}})$, $i=0,1,\dots, T$. Then the integration domain of $[\sigma(\alpha_{t_{\text{max}}}), \sigma(\alpha_{t_{\text{min}}})]$ is split into $T$ intervals, where the range of the $i\text{-th}$ segment is $[\sigma(\alpha_{t_{i}}), \sigma(\alpha_{t_{i-1}})]$. Using the left-hand endpoints to represent the function value of each interval, the integral can be approximated by
\begin{equation}
    \log p_{\phi}(\boldsymbol{a}_0)\approx -\frac{d}{2}\log (2\pi e) + \frac{1}{2}\sum_{i=1}^T \left(d \cdot \sigma(\alpha_{t_i}) - \mathbb{E}_{\boldsymbol{\epsilon} }\left[\parallel \boldsymbol{\epsilon} - \boldsymbol{\epsilon}_{\phi}(\boldsymbol{a}_{t_i}, \alpha_{t_i}) \parallel^2_2 \right]\right)\frac{ \sigma(\alpha_{t_{i-1}})- \sigma(\alpha_{t_{i}})}{\sigma(\alpha_{t_i})\sigma(-\alpha_{t_i})}.
\end{equation}
Estimating the noise predicting error $\mathbb{E}_{\boldsymbol{\epsilon} }\left[\parallel \boldsymbol{\epsilon} - \boldsymbol{\epsilon}_{\phi}(\boldsymbol{a}_{t_i}, \alpha_{t_i}) \parallel^2_2 \right]$ using Monte Carlo samples, we have 
\begin{equation}
    \label{eqaution::complete integral}
    \log p_{\phi}(\boldsymbol{a}_0)\approx -\frac{d}{2}\log (2\pi e) + \frac{1}{2}\sum_{i=1}^T \left(d \cdot \sigma(\alpha_{t_i}) - \frac{1}{N}\sum^N_{j=1}\parallel \boldsymbol{\epsilon}^j - \boldsymbol{\epsilon}_{\phi}(\boldsymbol{a}_{t_i}^j, \alpha_{t_i}) \parallel^2_2 \right)\frac{ \sigma(\alpha_{t_{i-1}})- \sigma(\alpha_{t_{i}})}{\sigma(\alpha_{t_i})\sigma(-\alpha_{t_i})},
\end{equation}
where $\boldsymbol{\epsilon}^1,\dots,\boldsymbol{\epsilon}^N \sim \mathcal{N}(\boldsymbol{0}, \boldsymbol{I})$, $\boldsymbol{a}_{t_i}^j = \sqrt{\sigma(\alpha_{t_i})} \boldsymbol{a}_0 + \sqrt{\sigma(-\alpha_{t_i})} \boldsymbol{\epsilon}^j$. The equation \ref{eqaution::complete integral} can be short for 
\begin{equation}
    \log p_{\phi}(\boldsymbol{a}_0)\approx c' + \frac{1}{2}\sum_{i=1}^T w_{t_i} \left( d \cdot \sigma(\alpha_{t_i}) - \tilde{\boldsymbol{\epsilon}}_{\phi}(\boldsymbol{a}_{t_i}, \alpha_{t_i})\right)
\end{equation}
where $c'=-\frac{d}{2}\log (2\pi e)$, $w_{t_i}=\frac{\sigma(\alpha_{t_{i-1}}) - \sigma(\alpha_{t_{i}})}{\sigma(\alpha_{t_i})\sigma(-\alpha_{t_i})}$ is the weight at $t_i$, $\tilde{\boldsymbol{\epsilon}}_{\phi}(\boldsymbol{a}_{t_i}, \alpha_{t_i})=\frac{1}{N}\sum^N_{j=1}\parallel \boldsymbol{\epsilon}^j - \boldsymbol{\epsilon}_{\phi}(\boldsymbol{a}_{t_i}^j, \alpha_{t_i}) \parallel^2_2$ is the noise prediction error estimation at $t_i$.

\section{Supplementary Related Work on Diffusion-based Energy Models}
A line of work focuses on applying the expressive diffusion models to approximate the exponential of a given energy function. QSM \cite{psenkalearning} trains the score function by aligning it with the gradient of the energy function. iDEM \cite{akhounditerated} proposes a weighted sum of the gradient of the energy function to estimate the true score function. These two approaches, which are based on gradient computation, suffer from a large estimation variance and demonstrate training instability when used for diffusion policy optimization, as evidenced in Section \ref{section::cmp_eval}. Recently, a work \cite{sendera2024improved} considers the Euler-Maruyama samplers of diffusion models as continuous generative flow networks (GFlowNets), and exploits the trajectory balance objective to train diffusion models. In this method, a replay buffer is used to store the sample generation trajectories of diffusion models, which may cause a heavy memory burden. The model-based diffusion \cite{pan2024model} proposes the Monte Carlo estimation for computing the score function and uses the Monte Carlo score ascent to generate samples following the Boltzmann distribution of a given function. The model-based diffusion is similar to our QNE method, however, QNE has several properties that matter in RL training: 
\begin{itemize}
    \item We use a parameterized network to approximate the scaled score function, while the model-based diffusion needs to compute the score function using Monte Carlo estimation when generating samples. Therefore, sample generation of model-based diffusion is time-consuming, which will slow the training speed of the RL algorithms.
    \item We adopt ancestral sampling in DDPM to generate samples, that are more diverse than that of Monte Carlo score ascent used in model-based diffusion.
    \item We propose to modify the standard Gaussian to the truncated Gaussian in QNE to model the action distribution with a bounded action space. However, model-based diffusion can not address such a bounded distribution.
\end{itemize}
These properties make QNE well-suited for the optimization of diffusion policy.

\section{Experimental Details}

\subsection{Hyperparameters Settings}
All experiments in this paper are conducted on a GPU of Nvidia GeForce RTX 3090 and a CPU of AMD EPYC 7742. Our implementation of SAC, MEow, TD3, QSM, DACER, QVPO, and DIPO follows their official codes: \hyperlink{https://github.com/toshikwa/soft-actor-critic.pytorch}{https://github.com/toshikwa/soft-actor-critic.pytorch}, \hyperlink{https://github.com/ChienFeng-hub/meow}{https://github.com/ChienFeng-hub/meow}, \hyperlink{https://github.com/sfujim/TD3}{https://github.com/sfujim/TD3}, \hyperlink{https://github.com/Alescontrela/score_matching_rl}{https://github.com/Alescontrela/score\_matching\_rl}, \hyperlink{https://github.com/happy-yan/DACER-Diffusion-with-Online-RL}{https://github.com/happy-yan/DACER-Diffusion-with-Online-RL}, \hyperlink{https://github.com/wadx2019/qvpo}{https://github.com/wadx2019/qvpo}, and \hyperlink{https://github.com/BellmanTimeHut/DIPO}{https://github.com/BellmanTimeHut/DIPO}. The shared hyperparameters of all algorithms are listed in Table  \ref{tab::parameter}\footnote{When comparing with other diffusion-based algorithms, MaxEntDP uses 3-layer MLPs as the actor and critic networks following the default settings of these algorithms. In other experiments, 2-layer MLPs are used as they can already attain good performance.}.

\begin{table}[ht]
\caption{The shared hyperparameters of all algorithms.}
\label{tab::parameter}
\vskip 0.15in
\begin{center}
\begin{small}
\begin{tabular}{lcccccccc}
\toprule
Hyperparameter & MaxEntDP & SAC & MEow & TD3 & QSM & DACER &  QVPO & DIPO  \\
\midrule
Batch size & 256 & 256 & 256 & 256 & 256 & 256 & 256 & 256 \\
Discount factor $\gamma$ & 0.99 & 0.99 & 0.99 & 0.99 & 0.99 & 0.99 & 0.99 & 0.99 \\
Target smoothing coefficient $\tau$ & 0.005 & 0.005 & 0.005 & 0.005 & 0.005 & 0.005 & 0.005 & 0.005 \\
No. of hidden layers & 2 & 2 & 2 & 2 & 3 & 3 & 3 & 3 \\
No. of hidden nodes & 256 & 256 & 256 & 256 & 256 & 256 & 256 & 256 \\
Actor learning rate & 3e-4 & 3e-4 & 3e-4 & 3e-4 & 3e-4 & 3e-4 & 3e-4 & 3e-4 \\
Critic learning rate & 3e-4 & 3e-4 & 3e-4 & 3e-4 & 3e-4 & 3e-4 & 3e-4 & 3e-4 \\
Activation & mish & relu & relu & relu & mish & mish & mish & mish \\
Replay buffer size & 1e6 & 1e6 & 1e6 & 1e6 & 1e6 & 1e6 & 1e6 & 1e6 \\
Diffusion steps & 20 & N/A & N/A & N/A & 20 & 20 & 20 & 20 \\
Action candidate number & 10 & N/A & N/A & N/A & N/A & N/A & 32 & N/A \\
\bottomrule
\end{tabular}
\end{small}
\end{center}
\vskip -0.1in
\end{table}

\subsection{Training Time}
The training time for all algorithms is presented in Table \ref{tab::training_time}. Leveraging the computation efficiency of JAX \cite{frostig2018compiling} and the parallel processing capabilities of GPU, MaxEntDP demonstrates high training efficiency compared to competing methods, only behind TD3 and QSM. This highlights its advantage for real-world applications that require high computation efficiency.

\begin{table}[ht]
\caption{The comparison of training time on HalfCheetah-v3 benchmark.}
\label{tab::training_time}
\vskip 0.15in
\begin{center}
\begin{small}
\begin{tabular}{lcccc}
\toprule
Algorithm (2-layer MLP) & MaxEntDP (jax) & SAC & MEow & TD3\\
\midrule
Training time (h) & 3.9 & 4.6 & 11 & 1.7 \\
\bottomrule
\end{tabular}

\vskip 0.1in

\begin{tabular}{lccccc}
\toprule
Algorithm (3-layer MLP) & MaxEntDP (jax) & QSM (jax) & DACER (jax) & QVPO & DIPO\\
\midrule
Training time (h) & 5.5 & 1.9 & 5.9 & 22.6 & 55.6 \\
\bottomrule
\end{tabular}
\end{small}
\end{center}
\vskip -0.1in
\end{table}

\section{Supplementary Experiments}

\subsection{Hyperparameter Analysis}

\begin{figure}[ht]
\begin{center}
\centerline{\includegraphics[width=0.7\columnwidth]{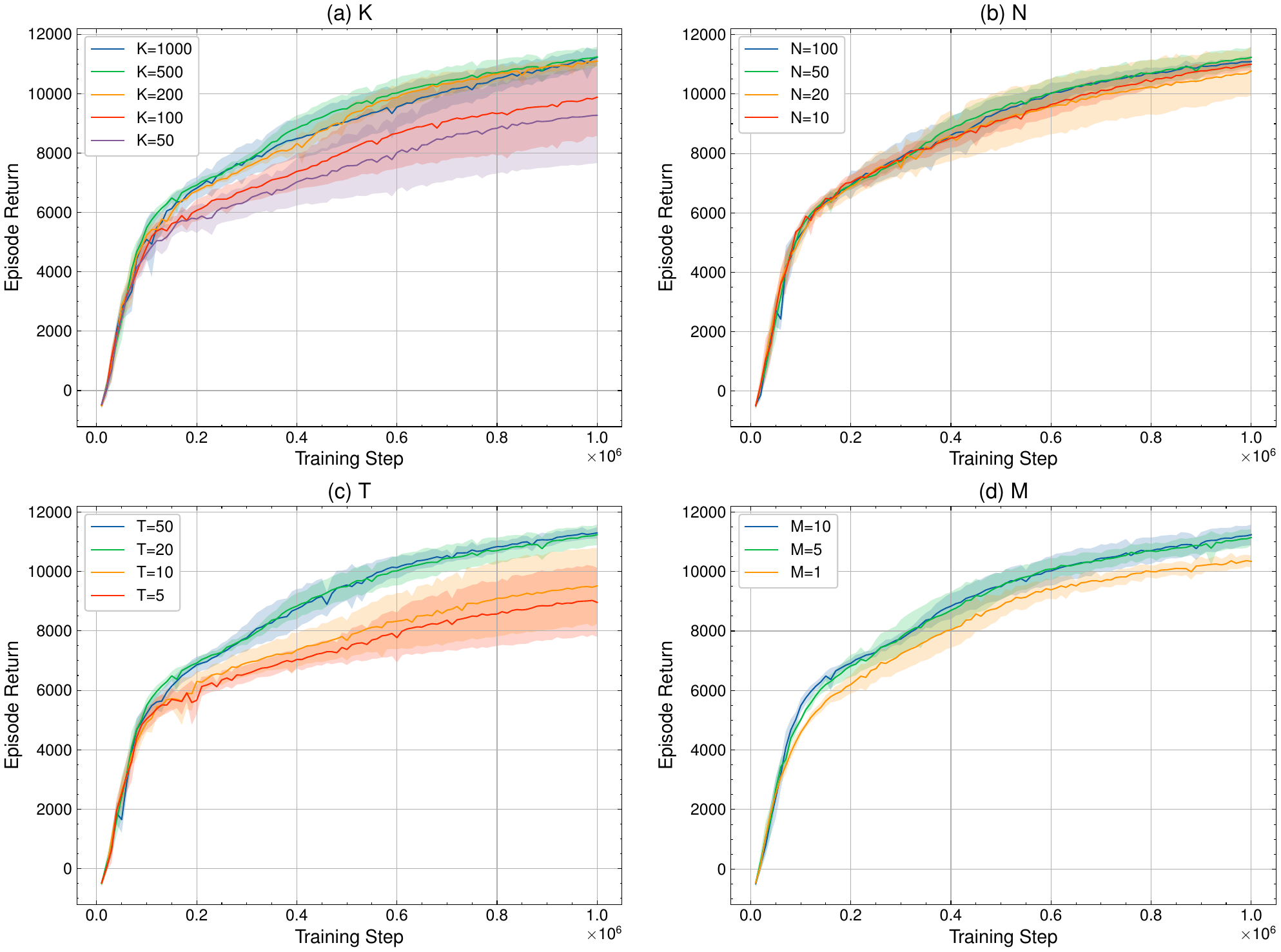}}
\caption{Learning curves of different parameter settings on HalfCheetah-v3 benchmark. (a) Testing different numbers of samples $K$ for Q-weighted noise estimation. (b) Testing different numbers of samples $N$ for probability approximation. (c) Testing different diffusion steps $T$. (d) Testing different candidate numbers $M$ for action selection.}
\label{fig::ablation_parameter}
\end{center}
\vskip -0.3in
\end{figure}

In this subsection, we analyze the effect of different hyperparameter settings on the performance:

\begin{itemize}
    \item \textbf{Sample Number for Q-weighted Noise Estimation.} The Q-weighted noise estimation can be seen as a weighted importance sampling method to estimate the training target of the noise prediction network. With more samples, the estimation will be more accurate and less varied, which benefits the training of diffusion policy. This is consistent with the observation in Figure \ref{fig::ablation_parameter}(a) that the performance will be better with a larger $K$. We select $K=500$ since it can obtain good performance and cause relatively small computation costs.
    \item \textbf{Sample Number for Probability Approximation.} For probability approximation of diffusion policy, several Monte Carlo samples are utilized to estimate the noise prediction error at each diffusion timestep. This sample number is also preferred to be large for higher accuracy and less variance. The performance of different sample numbers $N$ is shown in Figure \ref{fig::ablation_parameter}(b). We set $N=50$ after trading off performance and computation efficiency.
    \item \textbf{Diffusion Steps.} Due to the discretization error of ODE solvers, the actual distribution of generated actions may be different from the diffusion policy induced by the noise prediction network. Therefore, when the diffusion steps $T$ is small, the non-negligible discretization error will disrupt the training process and lead to a performance drop. As shown in Figure \ref{fig::ablation_parameter}(c), the performance is higher with larger $T$. We choose $T=20$ as the default setting for a balance between performance and computation efficiency.
    \item \textbf{Candidate Number for Action Selection.} By selecting the action with the highest Q-value among several action candidates, the action selection technique can further improve the performance of the diffusion policy when testing. Figure \ref{fig::ablation_parameter}(d) demonstrates that a larger number of action candidates will result in a better performance. Consequently, we set $M=10$ by default. 
    \item \textbf{Temperature Coefficient.} The temperature coefficient $\beta$, which determines the exploration strength, is an important parameter in the MaxEnt RL framework. Since the difficulty and reward scales vary across different tasks, different $\beta$ need to be set for different tasks. We sweep over $[0.01, 0.02, 0.05, 0.1, 0.2]$ to find the optimal setting for each task, displaying the results in Figure \ref{fig::ablation_temp}. The temperature coefficient selected for each task is listed in Table \ref{tab::temp}.
\end{itemize}

\begin{figure}[ht]
\begin{center}
\centerline{\includegraphics[width=0.9\columnwidth]{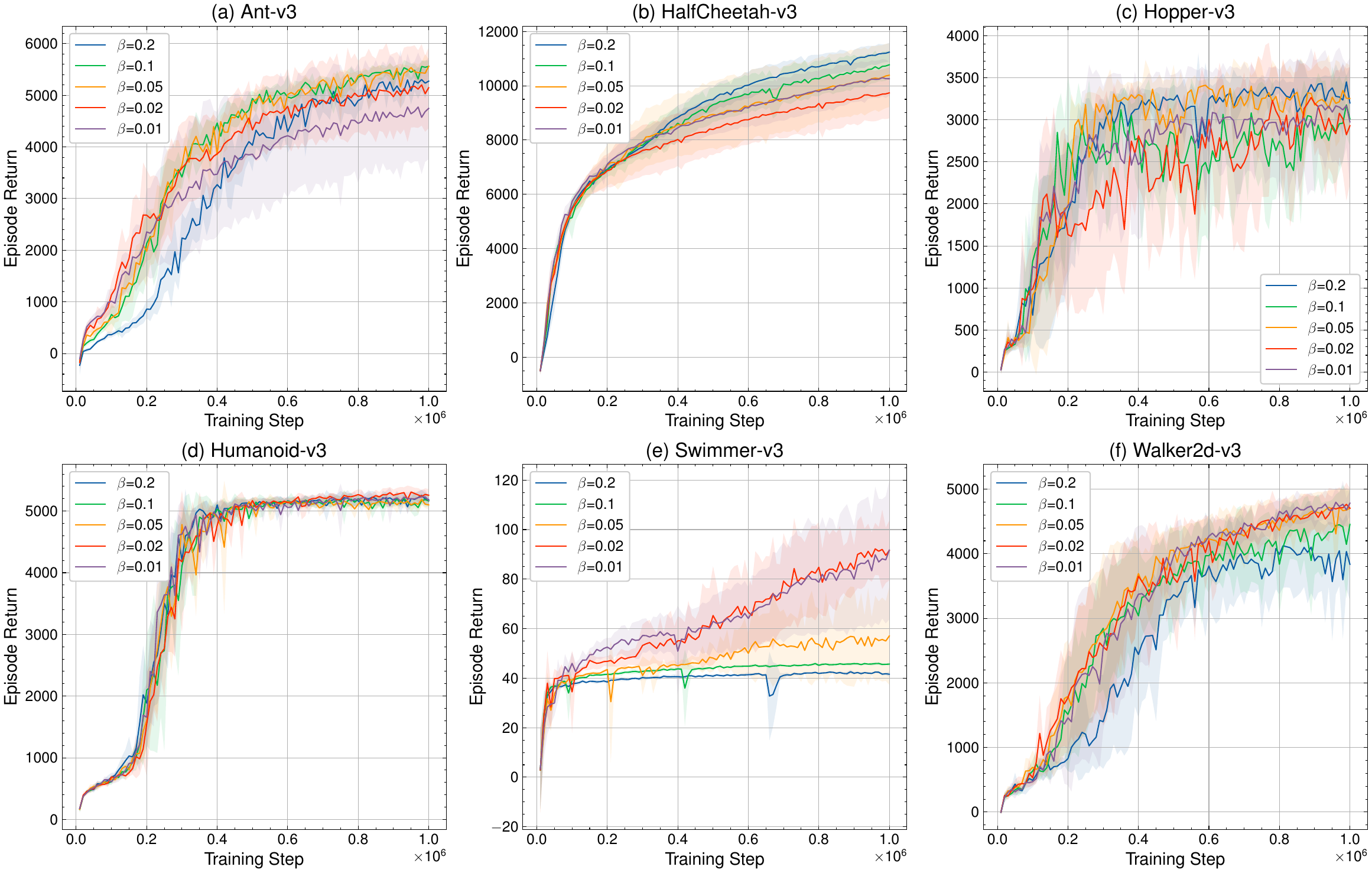}}
\caption{Learning curves of different temperature coefficients on Mujoco benchmarks.}
\label{fig::ablation_temp}
\end{center}
\vskip -0.3in
\end{figure}

\begin{table}[t]
\caption{The temperature coefficients adapted for each benchmark.}
\label{tab::temp}
\vskip 0.15in
\begin{center}
\begin{small}
\begin{tabular}{cc}
\toprule
Benchmark & Temperature coefficient \\
\midrule
Ant-v3 & 0.05 \\
HalfCheetah-v3 & 0.2 \\
Hopper-v3 & 0.05 \\
Humanoid-v3 & 0.02 \\
Swimmer-v3 & 0.01 \\
Walker2d-v3 & 0.01 \\
\bottomrule
\end{tabular}
\end{small}
\end{center}
\vskip -0.1in
\end{table}

\subsection{Multimodal Policy Learning on the Challenging AntMaze Benchmarks}

\begin{figure}[ht]
\begin{center}
\centerline{\includegraphics[width=0.9\columnwidth]{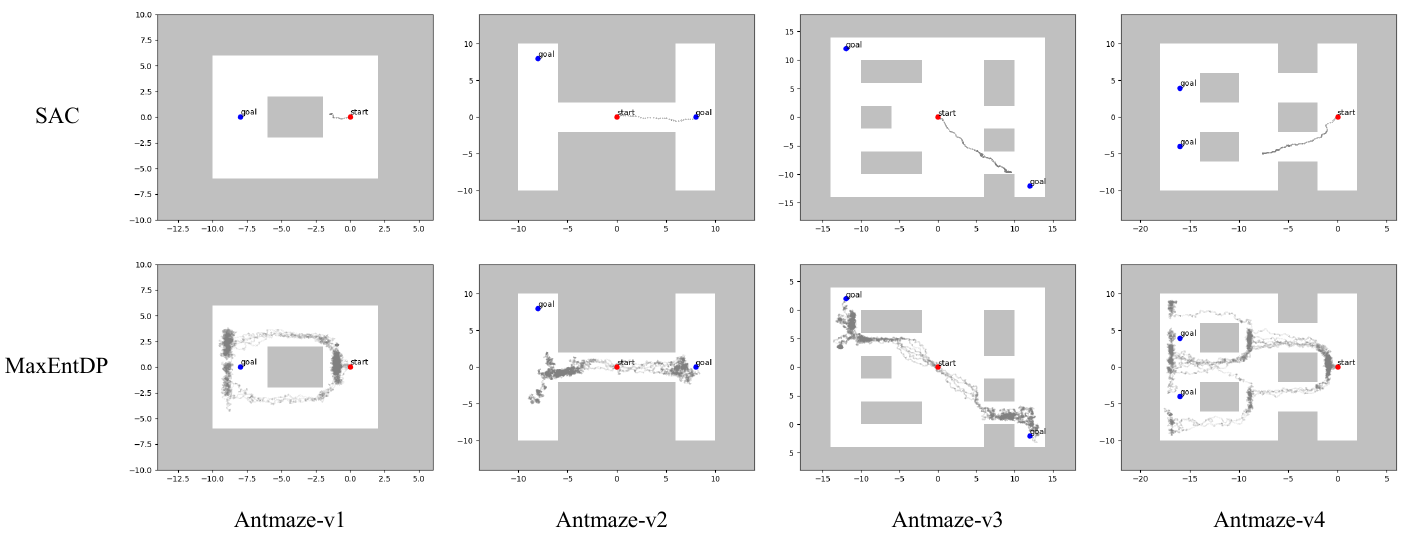}}
\caption{Trajecories generated by MaxEntDP and SAC after 1M environment interactions in Antmaze benchmarks.}
\label{fig::trajectory_antmaze}
\end{center}
\vskip -0.3in
\end{figure}

\begin{figure}[ht]
\begin{center}
\centerline{\includegraphics[width=0.9\columnwidth]{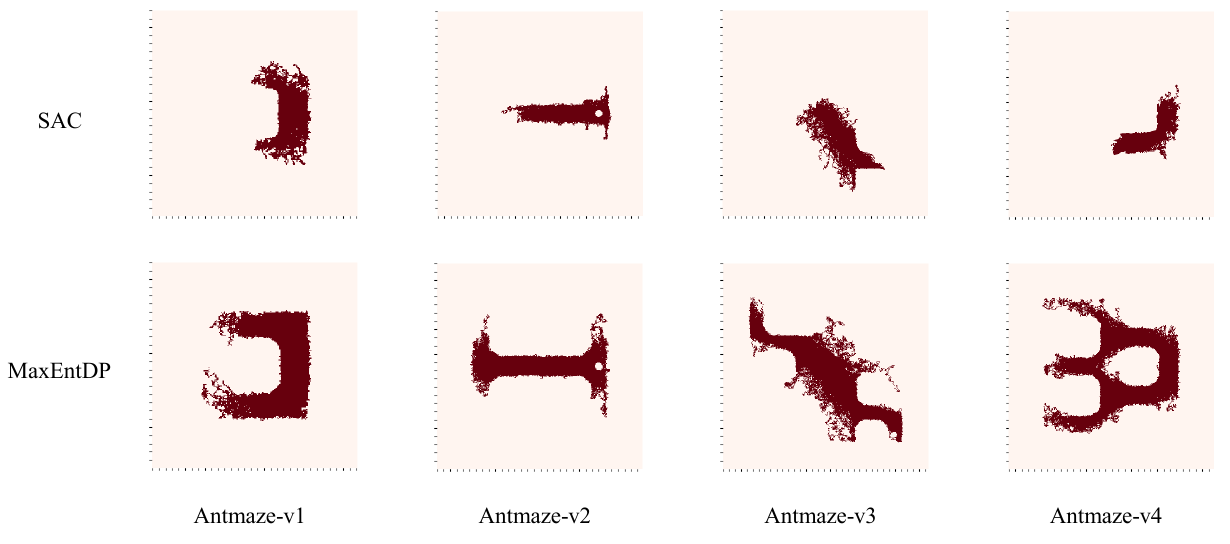}}
\caption{State coverage of MaxEntDP and SAC after 1M environment interactions in Antmaze benchmarks. MaxEntDP can explore different behavior modes at the same time and show broader state coverage than SAC, exhibiting efficient exploration of the high-dimensional state-action space.}
\label{fig::coverage_antmaze}
\end{center}
\vskip -0.3in
\end{figure}

We adopt the AntMaze benchmarks proposed in DDiffPG \cite{li2024learning} to test the multi-modal policy learning ability of MaxEntDP on the challenging high-dimensional RL tasks. In this environment, the agent is a quadruped robot trying to reach the specified goals. Instead of the sparse reward employed in DDiffPG, we use a dense reward, a penalty for the distance from the closest goal, to guide policy learning. We demonstrate the trajectories generated by MaxEntDP and SAC after 1M environment interactions in Figure \ref{fig::trajectory_antmaze}. MaxEntDP can learn diverse behavior modes even in the challenging high-dimensional tasks, while SAC fails to learn different solutions. In addition, we visualize state coverage through the training process for MaxEntDP and SAC, showing the results in Figure \ref{fig::coverage_antmaze}. We can see that MaxEntDP can explore multiple behavior modes at the same time, while SAC focuses only on a simple mode. This reveals the importance of using the expressive diffusion policy for efficient exploration and multimodal policy learning.

\subsection{Comparative Evaluation on the DeepMind Control Suite}

\begin{figure*}[ht]
\begin{center}
\centerline{\includegraphics[width=0.9\columnwidth]{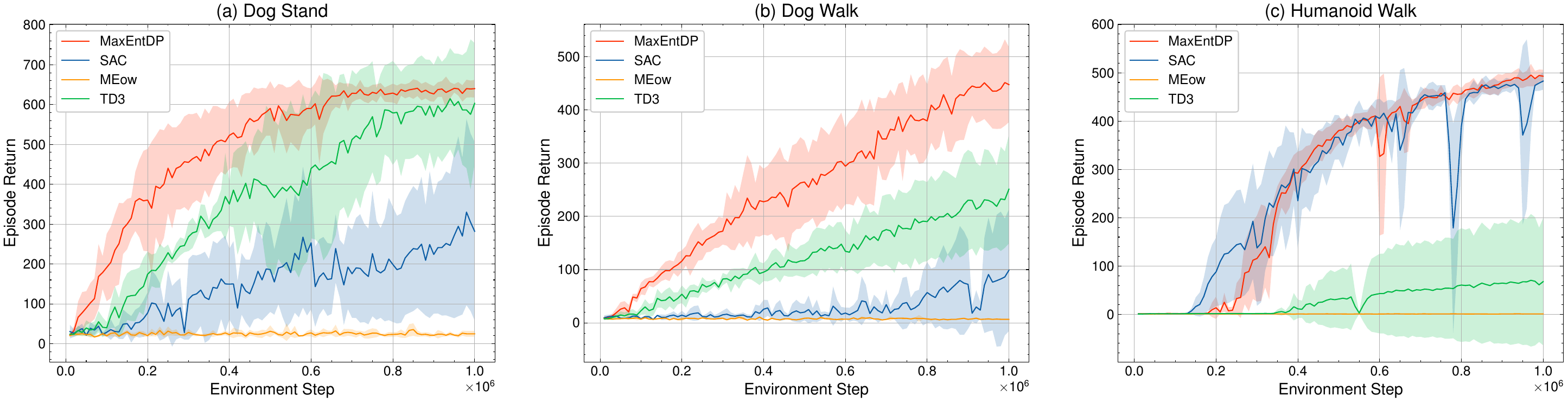}}
\caption{Learning curves on DeepMind Control suite. The solid lines are the means, and the shaded regions represent the standard deviations over five runs.}
\label{fig::DMC}
\end{center}
\end{figure*}

We test MaxEntDP on 3 high-dimensional tasks on the DeepMind Control Suite benchmarks. The performance comparison with SAC, MEow, and TD3 is displayed in Figure \ref{fig::DMC}. MaxEntDP outperforms other baseline algorithms on these challenging high-dimensional RL tasks.

\subsection{Testing the Accuracy of the Proposed Numerical Integration Technique on Probability Approximation}

In theory, the numerical integration will be accurate when the diffusion step $T$ and the number of samples $N$ for probability approximation become large enough, according to the Law of Large Numbers. To exhibit the accuracy of different $T$ and $N$, we conduct experiments on a simple 2-D toy example where the target distribution $p(x)$ is a mixture of four Gaussian distributions with equal weights. Therefore, we set $Q(x)=\log p(x)$ and utilize the QNE method proposed in our paper to train a diffusion model. We display the approximation results of different $T$ and $N$ in Figure \ref{fig::probability}. The setting $T=20,N=50$ used in the paper can provide an effective probability approximation for the diffusion policy. When the samples are less ($T=20,N=20$), although there is a non-negligible error to the ground truth, the numerical integration method can still assign higher values for the region with higher probability. In this case, the estimated log probability can be considered as a kind of intrinsic curiosity reward to promote the exploration of the action region with low policy probability.

\begin{figure}[ht]
\begin{center}
\centerline{\includegraphics[width=0.5\columnwidth]{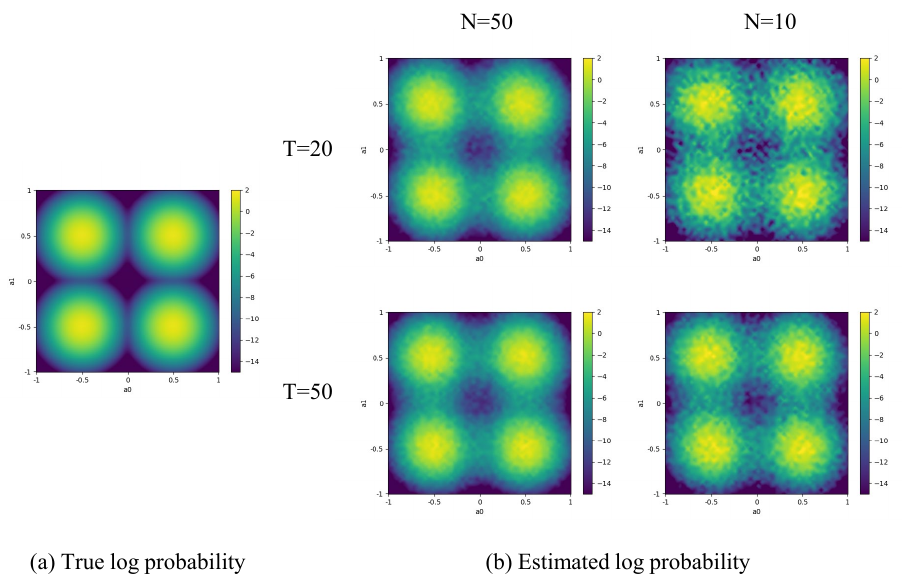}}
\caption{The probability approximation using numerical integration method on a 2-D toy example with different diffusion steps $T$ and sample numbers $N$. The target distribution is a mixture of four Gaussian distributions, whose means are (-0.5, -0.5), (-0.5, 0.5), (0.5, 0.5) and (0.5, -0.5). The standard deviations and weights of the four components are the same, which are 0.1 and 0.25, respectively. The setting in the paper ($T=20, N=50$) can provide an effective approximation for the true log probability. When fewer samples ($T=20, N=10$) are used, despite some estimation errors, our method can still assign higher values to high-probability regions.}
\label{fig::probability}
\end{center}
\vskip -0.3in
\end{figure}

\newpage

\end{document}